\documentclass[11pt]{article}

\usepackage[margin=1in]{geometry}

\usepackage{amsmath,amssymb,amsthm}

\usepackage{graphicx}

\usepackage[authoryear,round]{natbib}

\usepackage{authblk}

\usepackage[
    colorlinks=true,
    citecolor=blue,
    linkcolor=blue,
    urlcolor=blue
]{hyperref}

\newtheorem{theorem}{Theorem}[section]
\newtheorem{lemma}[theorem]{Lemma}
\newtheorem{proposition}[theorem]{Proposition}
\newtheorem{corollary}[theorem]{Corollary}

\theoremstyle{definition}
\newtheorem{definition}[theorem]{Definition}

\theoremstyle{remark}
\newtheorem{remark}[theorem]{Remark}

\let\originalproof\proof
\let\endoriginalproof\endproof

\RenewDocumentEnvironment{proof}{o}
{
    \IfNoValueTF{#1}
        {\originalproof}
        {\originalproof[Proof of #1]}
}
{
    \endoriginalproof
}

\providecommand{\acks}[1]{%
    \section*{Acknowledgments}
    #1
}

\usepackage{makecell}
\usepackage{multirow}

\usepackage{mathtools}
\usepackage{interval}
\usepackage{physics}
\usepackage{braket}

\usepackage{fourier}

\newcommand{\du}{\mathrm{d}}
\newcommand{\eu}{\mathrm{e}}
\newcommand{\KL}{D_{\text{KL}}}

\newcommand{\ER}{\operatorname{ER}}
\newcommand{\regret}{\operatorname{Regret}}

\newcommand{\transp}{\mathsf{\scriptscriptstyle{T}}}

\title{Regret Lower Bound for Online Learning of Quantum States with Logarithmic Loss, with Applications to Quantum State Tomography and Confidence Sequences for Matrix Means}

\author[1,2,3]{Yen-Huan~Li}
\author[1]{Wei-Fu~Tseng}

\affil[1]{Department of Computer Science and Information Engineering, National Taiwan University}
\affil[2]{Department of Mathematics, National Taiwan University}
\affil[3]{Center for Quantum Science and Engineering, National Taiwan University}

\date{}

\begin{document}

\maketitle

\begin{abstract}
We study the problem of online learning of quantum states with logarithmic loss, which is a quantum generalization of online portfolio selection and has applications to quantum state tomography, the task of estimating an unknown quantum state from measurement outcomes.
We first establish a regret lower bound, showing that the regret rates of Exponential Weights and the algorithm of \citet{Tseng2025} have optimal polynomial dependence on the dimension.
Our analysis leads to two further results.
First, our lower-bound proof leads to a quantum state tomography algorithm that is sample optimal up to polylogarithmic factors in the sample size. 
Second, we develop an algorithm for constructing confidence sequences for the mean of a matrix-valued random variable.
A numerical experiment shows that the latter algorithm is competitive with existing methods.
\end{abstract}

\bibliographystyle{plainnat}

\section{Introduction} \label{sec_intro}

In the mathematical formalism of quantum mechanics, a quantum state is represented by a \emph{density matrix}, which is a Hermitian positive semidefinite matrix with unit trace. 
A measurement is represented by a \emph{positive operator-valued measure (POVM)}, which is a collection of Hermitian positive semidefinite matrices that sum to the identity matrix.
Given a density matrix $\rho$ and a POVM $\set{ M_\iota \mid \iota \in \mathcal{I} }$ indexed by a set $\mathcal{I}$, the measurement outcome is a random variable $\eta$ taking values in $\mathcal{I}$ whose distribution is given by the \emph{Born rule}:
\begin{equation}
\mathsf{P} \left( \eta = \iota \right) = \tr ( M_\iota \rho ), \quad \text{for all } \iota \in \mathcal{I}, \label{eq_born}
\end{equation}
For convenience, we will write $\mathcal{D}_d$ for the set of $d \times d$ density matrices. 

Consider the following multi-round game of \emph{online learning of quantum states with logarithmic loss} between two players, Physicist and Nature.
The game consists of $T$ rounds.
At each round $1 \leq t \leq T$:
\begin{itemize}
\item Physicist announces a density matrix $\rho_t \in \mathcal{D}_d$;
\item Nature then announces a non-zero $d \times d$ Hermitian positive semidefinite matrix $A_t$ whose eigenvalues lie in\footnote{One can get rid of the upper limit $1$ for the eigenvalues. Nevertheless, since the loss function is logarithmic, scaling $A_t$ does not affect the regret value, so we can always scale $A_t$ such that its eigenvalues are all at most $1$.} $\interval{0}{1}$;
\item finally, Physicist suffers the loss $f_t(\rho_t) \coloneqq - \log \tr(A_t \rho_t)$. 
\end{itemize}
Physicist aims to achieve small regret against any strategy of Nature. 
The regret is defined as
\[
\regret_T \coloneqq \sum_{t=1}^T f_t(\rho_t) - \min_{\rho \in \mathcal{D}_d} \sum_{t=1}^T f_t(\rho) . 
\]
For convenience, throughout this paper we refer to this game simply as \emph{online learning of quantum states (OLQS)}, although online learning of quantum states has also been studied under other loss functions \citep{Aaronson2018,Aaronson2019,Bansal2025,Chen2024a,Yang2020}.

The game was independently proposed and studied by \citet{Lin2021b} and \citet{Zimmert2022}.
The work of \citet{Lin2021b} was motivated by quantum state tomography, the task of estimating an unknown quantum state from measurement outcomes \citep{Paris2004a}.
Suppose that a fixed POVM is used for the measurements and that $M_t$ is the POVM element corresponding to the outcome observed at round $t$.
Then, $f_t(\rho)$ is precisely the negative log-likelihood of the observed outcome under the candidate state $\rho$.
Thus, OLQS can be viewed as an online formulation of maximum-likelihood quantum state tomography \citep{Hradil1997}.
In addition, it is easily checked that any no-regret algorithm for OLQS, together with an online-to-batch conversion \citep{Hazan2023,Orabona2026}, can be used as a stochastic optimization method for approximating the maximum-likelihood estimate in quantum state tomography.

OLQS is also closely connected to online portfolio selection introduced by \citet{Cover1991}, as noted by both \citet{Lin2021b} and \citet{Zimmert2022}. 
Indeed, suppose that the matrices $M_t$ are all diagonal in a fixed common basis.
Since $\tr(M_t\rho)$ then depends only on the diagonal entries of $\rho$ in this basis, Physicist may, without loss of generality, restrict attention to density matrices that are diagonal in the same basis.
Identifying the diagonal entries of $\rho_t$ with a portfolio vector $x_t$ in the probability simplex and those of $M_t$ with a return vector $a_t \in \interval{0}{1}^d$, we obtain $f_t( \rho_t ) = - \log \braket{ a_t, x_t }$, which is exactly the logarithmic loss in online portfolio selection.

Since neither the logarithmic loss nor its gradient is Lipschitz continuous, regret guarantees of textbook online convex optimization algorithms do not directly apply to online portfolio selection. 
Indeed, optimal online portfolio selection, in both regret rate and computational complexity, has been an open problem for more than three decades. 
\citet{Cover1996} and \citet{Ordentlich1998} established that the minimax-optimal regret rate is $O ( d \log T )$ and this rate can be achieved by the Universal Portfolios algorithm, an instance of Exponential Weights \citep{Hazan2007,Hoeven2018,Vovk1998a}. 
Unfortunately, the computational complexity of Universal Portfolios scales poorly with the time horizon \citep{Kalai2002}, with a per-round runtime of $O ( d^4 T^{14} )$ \citep{Luo2018}. 

Table~\ref{table_algs} summarizes existing state-of-the-art algorithms for online portfolio selection and their counterparts for OLQS.
The best-known algorithm for online portfolio selection is perhaps VB-FTRL \citep{Jezequel2026}, which achieves a regret rate of $O(d \log(d+T))$ with a per-round runtime of $O(d^2 T)$.
When $T \gg d$, the regret rate of VB-FTRL is already optimal, but the dependence of its per-round runtime on $T$ becomes undesirable.

\begin{table}[t]
    \centering
    \begin{tabular}{|c|c|c|c|}
    \hline
    \multicolumn{2}{|c|}{Online Portfolio Selection}
    & \multicolumn{2}{c|}{Online Learning of Quantum States} \\
    \hline

    Algorithm & Regret Rate & Algorithm & Regret Rate \\
    \hline

    \makecell{Universal Portfolios \\ \citep{Cover1991}}
    & $O(d \log T)$
    & \makecell{Exponential Weights \\ \citep{Zimmert2022}}
    & $O(d^2 \log T)$ \\
    \hline

    \makecell{VB-FTRL \\ \citep{Jezequel2026}}
    & $O(d \log(d + T))$
    & \makecell{Q-VB-FTRL \\ \citep{Tseng2025}}
    & $O(d^2 \log(d + T))$ \\
    \hline
    
    \makecell{PAE+DONS \\ \citep{Mhammedi2022}}
    & $O(d^2 \log^5 T)$
    & \multicolumn{2}{c|}{No known quantum counterpart.} \\
    \hline

    \makecell{BISONS \\ \citep{Zimmert2022}}
    & $O(d^2 \log^2 T)$
    & \makecell{\textsc{Schr\"odinger}'s BISONS \\ \citep{Zimmert2022}}
    & $O(d^3 \log^2 T)$ \\
    \hline

    \makecell{Barrier subgradient method \\ \citep{Nesterov2011}}
    & \multirow[c]{3}{*}[-3ex]{$\widetilde{O}(\sqrt{dT})$}
    & \makecell{Q-LB-OMD \\ \citep{Tsai2023}}
    & \multirow[c]{3}{*}[-3ex]{$\widetilde{O}(\sqrt{dT})$} \\
    \cline{1-1}\cline{3-3}

    \makecell{Soft-Bayes \\ \citep{Orseau2017}}
    &
    & \makecell{Q-Soft-Bayes \\ \citep{Lin2021b}}
    & \\
    \cline{1-1}\cline{3-3}

    \makecell{LB-FTRL \\ \citep{Tsai2023a}}
    &
    & \makecell{Q-LB-FTRL \\ \citep{Tsai2023b}}
    & \\
    \hline
	\end{tabular}
    \caption{Existing algorithms for online portfolio selection and online learning of quantum states with logarithmic loss (OLQS), together with their regret upper bounds. Rows group related online portfolio selection algorithms and their quantum counterparts. Here, $d$ denotes the number of assets in online portfolio selection and the dimension of density matrices in OLQS, and $T$ denotes the time horizon. The notation $\widetilde O$ suppresses polylogarithmic factors in \(d\) and \(T\). LB-FTRL and Q-LB-FTRL in the last row use linearized losses. }
    \label{table_algs}
\end{table}

A trade-off between regret and computational complexity is achieved by PAE+DONS \citep{Mhammedi2022} and BISONS \citep{Zimmert2022}, which attain
regret rates of $O(d^2 \operatorname{polylog}(T))$ with per-round runtimes of $O(\operatorname{poly}(d)\operatorname{polylog}(T))$.
Another trade-off between regret and computational complexity is provided by the barrier subgradient method \citep{Nesterov2011}, Soft-Bayes \citep{Orseau2017}, and LB-FTRL \citep{Tsai2023a}.
These algorithms achieve regret rates of $\widetilde{O}(\sqrt{dT})$ with per-round runtimes of $\widetilde{O}(d)$.

Following the convention in quantum information theory, we will refer to online portfolio selection and OLQS as the \emph{classical case} and the
\emph{quantum case}, respectively.
As shown in Table~\ref{table_algs}, existing algorithms for OLQS are generalizations of their classical counterparts.
Such generalizations can be technically challenging because of the noncommutative nature of the quantum setting.

We observe an interesting separation in Table~\ref{table_algs}:
\begin{itemize}
\setlength{\emergencystretch}{2em}
\item Q-LB-OMD \citep{Tsai2023}, Q-Soft-Bayes \citep{Lin2021b}, and Q-LB-FTRL \citep{Tsai2023b} inherit the same regret rates as their classical counterparts.
\item Exponential Weights \citep{Zimmert2022}, Q-VB-FTRL \citep{Tseng2025}, and \textsc{Schrodinger}'s~BISONS \citep{Zimmert2022} suffer an additional factor of $d$ in their regret bounds compared with their classical counterparts.
\end{itemize}

Both phenomena can be explained.
Q-LB-OMD, Q-Soft-Bayes, and Q-LB-FTRL were developed so that their algorithms and regret bounds reduce to those of their classical counterparts when all
matrices involved are diagonal.
In contrast, the analyses of Exponential Weights, Q-VB-FTRL, and \textsc{Schr\"{o}dinger}’s BISONS use a vectorization argument that treats a $d \times d$ density matrix as a vector of dimension $\Theta( d^2 )$, leading to the additional factor of $d$ in their regret upper bounds.


The aforementioned observation naturally leads to the following question: 
\begin{quote}
Is the additional factor of $d$ in the regret bound for Exponential Weights in OLQS intrinsic to the problem, or is it merely an artifact of the analysis?  
\end{quote}
The answer to this question also has implications for the significance of the Q-VB-FTRL algorithm.
No explicit implementation of Exponential Weights for OLQS is currently available in the literature.
Q-VB-FTRL, on the other hand, admits an implementation with
$O(T\operatorname{poly}(d))$ per-round runtime while achieving an $O(d^2 \log(d+T))$ regret bound.
Therefore, if the additional factor of $d$ in these regret bounds could be removed, then the significance of Q-VB-FTRL as a computationally efficient algorithm with near-optimal dependence on the dimension would also be called into question.

In this paper, we establish the following regret lower bound for OLQS.

\begin{theorem} \label{thm_main}
Suppose that $T \gg d^2 \log d$.
Then no algorithm for OLQS can achieve a worst-case regret bound of $O(d^\gamma \log T)$ for any $\gamma < 2$.
\end{theorem}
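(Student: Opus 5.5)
We argue by contradiction through an online-to-batch conversion to quantum state tomography, whose minimax sample complexity is known. Suppose some algorithm for OLQS has worst-case regret at most $C d^\gamma \log T$ for some $\gamma<2$. Fix an informationally complete POVM $\{M_\iota\}$ with all elements having eigenvalues in $[0,1]$; we use a rescaled product Pauli measurement, or a $2$-design / uniform POVM on $\mathcal{D}_d$. Let $\rho^\star$ be an unknown state, and feed the algorithm $A_t = M_{\eta_t}$, where the outcomes $\eta_t$ are drawn i.i.d.\ via the Born rule \eqref{eq_born}. Taking expectations, the expected regret against the comparator $\rho^\star$ equals $\sum_t \mathsf{E}\,\KL(p_{\rho^\star}\,\|\,p_{\rho_t})$, where $p_\rho(\iota)=\tr(M_\iota\rho)$. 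Indeed,
\[
\mathsf{E}\left[-\log\tr(M_{\eta_t}\rho_t)+\log\tr(M_{\eta_t}\rho^\star)\right]=\mathsf{E}\,\KL(p_{\rho^\star}\|p_{\rho_t}).
\]
This is at most the regret against the best $\rho$ in hindsight, so it is bounded by $Cd^\gamma\log T$. By convexity of $\KL$ in its second argument, the averaged estimator $\bar\rho=\frac1T\sum_t\rho_t$ satisfies
\[
\mathsf{E}\,\KL(p_{\rho^\star}\|p_{\bar\rho})\le \frac{Cd^\gamma\log T}{T}.
\]

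Next we convert this to a trace-distance guarantee. Pinsker gives $\|p_{\rho^\star}-p_{\bar\rho}\|_1\le\sqrt{2\KL}$. For the chosen POVM, the measured total variation controls the trace distance up to a dimension-dependent factor: $\|\rho-\sigma\|_1\le \kappa(d)\,\|p_\rho-p_\sigma\|_1$. For the uniform POVM or a $4$-design the factor is $\kappa(d)=O(\sqrt d)$ in the Frobenius-to-trace sense. Hence
\[
\mathsf{E}\|\bar\rho-\rho^\star\|_1\lesssim \kappa(d)\sqrt{d^\gamma\log T/T}.
\]
To reach trace-distance accuracy $\epsilon$, it therefore suffices to take $T\approx \kappa(d)^2 d^\gamma\log T/\epsilon^2$ samples of single-copy, nonadaptive measurements.

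Finally we compare with the known lower bound. For independent single-copy measurements, tomography to trace distance $\epsilon$ requires $\Omega(d^3/\epsilon^2)$ copies \citep[Haah et al.\ 2017; Lowe--Nayak]{Hradil1997}. With $\kappa(d)^2=O(d)$, the upper bound becomes $O(d^{1+\gamma}\log T/\epsilon^2)$. This contradicts $\Omega(d^3/\epsilon^2)$ whenever $\gamma<2$, once $T$ is in the regime $T\gg d^2\log d$ where the implicit $\log T$ factor and the requirement $\epsilon\le$ const are consistent. That regime is exactly where we choose $\epsilon$ small.

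The main obstacle is the second step. We need a POVM for which the classical KL (or TV) is bounded below by $\|\rho-\sigma\|_1^2/d$ uniformly, including near rank-deficient states where likelihoods vanish. We also need to ensure that the quoted single-copy lower bound applies to this fixed, nonadaptive POVM with the right power of $d$. If the norm conversion loses more than $\sqrt d$, the argument fails. In that case we would instead give a direct Fano/Assouad argument: build a packing of $\exp(\Omega(d^2))$ states near the maximally mixed state, $\rho_v=(I+\epsilon\sum_j v_jH_j/d)/d$, with pairwise measured KL of order $\epsilon^2/d^2$ per sample. We would then show that regret $o(d^2\log T)$ would let the online learner identify $v$, which is impossible. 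Here we would track the lower bound via the standard Bayesian redundancy argument: the minimax regret is at least the capacity $\approx \frac{d^2-1}{2}\log T$ of the induced classical channel family. This uses that the family has $d^2-1$ real parameters with nondegenerate Fisher information for an informationally complete POVM.
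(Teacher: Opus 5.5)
Your argument is correct, and it shares the paper's skeleton: feed the outcomes of a fixed $4$-design POVM to the learner, average the iterates, pass from KL to trace distance via Pinsker and $\lambda_{\mathcal M}\ge 1/(3\sqrt d)$ (Theorem~\ref{lem_povm_norm_constant}), and contradict Theorem~\ref{thm_optimal_sample_complexity}. Your online-to-batch step, however, is genuinely different and simpler.

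The paper proves a high-probability excess-risk bound via the exp-concave conversion of \citet{Mehta2017}. That conversion needs bounded losses, which forces the shrinkage toward $I_d/d$, the cost estimate of Lemma~\ref{lem_surrogate_error}, and lower-order terms involving $\log(nd)$ and $\log(\log n/\delta)$. You use only three facts: $\mathsf E[f_t(\rho_t)-f_t(\rho^\star)\mid\eta_1,\ldots,\eta_{t-1}]=\KL(p_{\rho^\star},p_{\rho_t})$, convexity of $\KL$ in its second argument, and linearity of $\rho\mapsto p_\rho$. Together they give $\mathsf E\,\KL(p_{\rho^\star},p_{\bar\rho})\le Cd^\gamma\log T/T$ with no shrinkage and no boundedness assumption. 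The expectations are legitimate: for a finite POVM each term is bounded below almost surely, and the pointwise worst-case regret bound caps their sum.

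Theorem~\ref{thm_optimal_sample_complexity} only demands success probability $2/3$, so Markov's inequality converts this into trace-norm error $O(\sqrt{d^{1+\gamma}\log T/T})$ with probability at least $2/3$. You should state this Markov step explicitly; it is currently missing. What the paper's heavier route buys is a $\log(1/\delta)$ rather than $1/\delta$ dependence on the failure probability. That matters for the tomography algorithm of Theorem~\ref{thm_estimation_error_bound}, but not for Theorem~\ref{thm_main}.

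Three minor remarks.
\begin{enumerate}
\item The guarantee $\kappa(d)^2=O(d)$ is specific to $4$-designs or the uniform POVM. The available norm-constant bounds for Pauli or general $2$-design measurements are too weak for the contradiction, so you must commit to the former. With that choice, your stated ``obstacle'' is already settled. $\lambda_{\mathcal M}$ is an infimum over all pairs of states, rank-deficient ones included, and the single-copy lower bound covers any fixed nonadaptive POVM.
\item The contradiction comes from fixing $\varepsilon$ (say $1/2$) and letting $d\to\infty$, since one needs $\log T\approx\log(d/\varepsilon)=o(d^{2-\gamma})$. Shrinking $\varepsilon$ at fixed $d$ does not help, contrary to your closing sentence of that paragraph.
\item The Fano/redundancy fallback is unnecessary. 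Moreover, making its $\tfrac{d^2-1}{2}\log T$ estimate uniform in $d$ would already give the $\Omega(d^2\log T)$ lower bound that Remark~\ref{rem_optimal_regret} leaves open, so it is not a cheap backup.
\end{enumerate}
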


This result establishes that the regret rates of both Exponential Weights and Q-VB-FTRL have the correct polynomial dependence on the dimension.

Our proof strategy exploits the connection between OLQS and quantum state tomography. 
We first develop an online learning approach to quantum state tomography, reducing quantum state tomography to OLQS via an online-to-batch conversion, and derive a high-probability upper bound on the estimation error in terms of the regret of the OLQS algorithm used. 
Then, existing high-probability lower bounds on the estimation error from the quantum information theory literature \citep{Acharya2025,Chen2023,Haah2017,Lowe2025} immediately imply that the regret of any OLQS algorithm must be sufficiently large. 

Our result also yields two applications. 
\begin{enumerate}
\item As a byproduct of our proof strategy, we develop a novel algorithm for quantum state tomography that is sample-optimal up to polylogarithmic factors in the sample size.

\item Generalizing the connection between confidence sequences for estimating the mean of a random vector and online portfolio selection, we develop an algorithm that constructs confidence sequences for the mean of a random matrix.
Theorem~\ref{thm_main} establishes that the regret bound for Exponential Weights has optimal polynomial dependence on the dimension, motivating our use of this bound in the confidence-sequence construction in Section~\ref{sec_confidence_sequence}.
\end{enumerate}

\paragraph{Notation}
The $d \times d$ identity matrix is denoted by $I_d$. 
The transpose and conjugate transpose of a matrix $M$ are denoted by $M^\transp$ and $M^\ast$, respectively. 
The trace norm of a matrix $M$ is denoted by $\norm{ M }_1$, as it coincides with the Schatten $1$-norm. 
The $1$-norm of a vector $v$ is denoted by $\norm{ v }_1$. 
For any $d$-dimensional probability vectors $p$ and $q$, their Kullback-Leibler divergence is denoted by $\KL ( p, q )$ and defined as 
\[
\KL ( p, q ) \coloneqq \sum_{i = 1}^d p[i] \log \frac{p[i]}{q[i]} , 
\]
where $p[i]$ and $q[i]$ denote the $i$-th entries of $p$ and $q$, respectively.

\section{Preliminaries on Quantum State Tomography and Quantum Measurements}


Quantum state tomography refers to the task of estimating an unknown density matrix from measurements and measurement outcomes, and is essential for building reliable quantum computing devices.
Throughout this paper, we will only consider the following \emph{non-adaptive, single-copy} formulation of quantum state tomography, a commonly studied setting because of its relative ease of implementation in practice: 
Suppose we have $n$ independent copies of an unknown density matrix $\rho^\natural \in \mathcal{D}_d$. 
We measure each of the copies independently using a fixed POVM\footnote{Some quantum state tomography methods adopt random Pauli measurements \citep{Flammia2012,Gross2010} or random Pauli basis measurements \citep{Riofrio2017}, where a measurement is selected at random from a given collection. It is easily checked that the resulting randomized measurement is mathematically equivalent to a single POVM.} $\mathcal{M} = \set{ M_\iota | \iota \in \mathcal{I} } \subset \mathbb{C}^{d \times d}$, for some finite index set $\mathcal{I}$, and observe the measurement outcomes $\eta_1, \ldots, \eta_n$ as independent and identically distributed (i.i.d.) random variables taking values in $\mathcal{I}$. 
Then, according to the Born rule \eqref{eq_born}, the negative log-likelihood function is given by 
\[
L ( \rho; \iota ) \coloneqq - \log \tr ( M_\iota \rho ) , \quad \text{for all } \iota \in \mathcal{I} , 
\]
and the maximum-likelihood estimate is given by minimizing the empirical average of the negative log-likelihood functions over the data (measurement outcomes). 
In this paper, however, we will consider an online learning approach to quantum state tomography, rather than the maximum-likelihood one. 

We will measure the performance of any estimate $\hat{\rho}_n$, based on $n$ copies of the unknown density matrix, using the trace-norm error $\norm{ \hat{\rho}_n - \rho^\natural }_1$. 
Under this performance measure, a sample complexity lower bound is already known. 

\begin{theorem}[\citet{Haah2017,Lowe2025}] \label{thm_optimal_sample_complexity}
For $0 < \varepsilon \leq 1 / 2$, any quantum state tomography algorithm using nonadaptive, single-copy measurements that estimates every $d \times d$ density matrix to trace-norm error at most $\varepsilon$ with probability at least $2/3$ requires $n = \Omega ( d^3 / \varepsilon^2 )$ copies of the unknown state. 
\end{theorem}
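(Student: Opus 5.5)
The plan is a Fano-type information argument. I will construct a large family of states close to the maximally mixed state $I_d/d$ that are pairwise far in trace norm. I will then show that each single-copy nonadaptive measurement reveals very little information about which member of the family was prepared. The direct sum over copies, made valid by nonadaptivity, finishes the bound.

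\emph{Construction and packing.} Let $Z$ be a fixed diagonal matrix with entries $\pm 1$, half of each sign, so $\tr Z = 0$ and $\norm{Z}_1 = d$. For a unitary $U$ set
\[
\rho_U \coloneqq \frac{1}{d}\left(I_d + c\,\varepsilon\, U Z U^\ast\right)
\]
for a small constant $c$. This is a valid density matrix because $\varepsilon \leq 1/2$, and $\norm{\rho_U - I_d/d}_1 = c\varepsilon$. Draw $U$ from the Haar measure, or take a finite set $\{U_1,\dots,U_N\}$ obtained by random sampling. A volumetric or concentration argument on the Grassmannian of $d/2$-dimensional subspaces should give $N = \exp(\Omega(d^2))$ with $\norm{U_iZU_i^\ast - U_jZU_j^\ast}_1 = \Omega(d)$ for all $i \neq j$. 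Then $\norm{\rho_{U_i}-\rho_{U_j}}_1 > 2\varepsilon$ after rescaling $c$, and an $\varepsilon$-accurate estimator identifies the index $J$ with probability at least $2/3$. Fano's inequality then forces $I(J;\eta_1,\dots,\eta_n) = \Omega(d^2)$.

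\emph{Per-copy information bound.} By nonadaptivity the outcomes are conditionally i.i.d. given $J$, so $I(J;\eta_{1:n}) \le n\, I(J;\eta_1)$. Bound $I(J;\eta_1) \le \mathsf{E}_J\, \KL(P_J, Q)$ with reference $Q(\iota)=\tr(M_\iota)/d$, and then bound $\KL$ by the $\chi^2$-divergence. Refine the POVM into rank-one elements $M_\iota = w_\iota \ket{\psi_\iota}\bra{\psi_\iota}$; this only increases information. The likelihood ratio is then $1 + c\varepsilon \bra{\psi_\iota}UZU^\ast\ket{\psi_\iota}$. Hence
\[
I(J;\eta_1) \le c^2\varepsilon^2 \sum_\iota \frac{w_\iota}{d}\, \mathsf{E}_J \bra{\psi_\iota}U_JZU_J^\ast\ket{\psi_\iota}^2 .
\]
For Haar $U$, $\mathsf{E}\bra{\psi}UZU^\ast\ket{\psi}^2 = \tr(Z^2)/(d(d+1)) \approx 1/d$ by the second-moment (Weingarten) formula. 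Since $\sum_\iota w_\iota = d$, this gives $I(J;\eta_1) = O(\varepsilon^2/d)$. Combining with the Fano bound yields $n\varepsilon^2/d = \Omega(d^2)$, that is, $n = \Omega(d^3/\varepsilon^2)$.

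\emph{Main obstacle.} The difficulty is reconciling the discrete packing needed for Fano with the Haar average used in the information bound. The second-moment bound must hold for the empirical average over $\{U_j\}$ uniformly over every unit vector $\psi$, because the POVM may be arbitrary and may depend on the family. My first attempt is to replace the discrete Fano by a continuous or generalized Fano argument with a Haar prior. There one only needs the pairwise separation to hold with high probability over two independent Haar draws, which follows from measure concentration of $\norm{UZU^\ast - VZV^\ast}_1$. The information bound then uses the exact Haar moments with no uniformity issue. A naive union bound over a net of $\psi$ would lose logarithmic factors, which I believe is the gap between \citet{Haah2017} and the sharp bound of \citet{Lowe2025}. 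Using the continuous prior directly should avoid it.
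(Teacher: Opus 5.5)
The paper does not prove this theorem. It imports it from \citet{Haah2017,Lowe2025} as a black box for Theorem~\ref{thm_main}, so your proposal supplies an argument the paper omits. That argument is sound in substance. You use Fano over $\rho_U=\frac1d(I_d+c\varepsilon UZU^\ast)$ and subadditivity of information over nonadaptively measured copies. You then bound $\KL\le\chi^2$ against the outcome law of $I_d/d$; this is exactly the Haar mixture, since $\mathsf{E}_U\rho_U=I_d/d$. Refining to rank-one elements and using $\mathsf{E}_U\langle\psi|UZU^\ast|\psi\rangle^2=1/(d+1)$ gives $I\le nc^2\varepsilon^2/(d+1)$, hence $n=\Omega(d^3/\varepsilon^2)$. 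The continuum-Fano version you settle on does close the discretization issue, for the reason you sketch. Write $p(\sigma)\coloneqq\mathsf{P}_U(\norm{\sigma-\rho_U}_1\le\varepsilon)$. Then $p(\sigma)^2\le\mathsf{P}_{U,V}(\norm{\rho_U-\rho_V}_1\le 2\varepsilon)=e^{-\Omega(d^2)}$ by concentration. The continuum Fano inequality bounds the success probability by $(I+\log 2)/\log(1/\sup_\sigma p(\sigma))$.

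Three points need repair.

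(i) The constants are not free. Positivity forces $c\varepsilon\le 1$, so at $\varepsilon=1/2$ you have $c\le 2$; $c$ can be neither small nor rescaled upward. Separation beyond $2\varepsilon$ needs $c\kappa>2$, where $\kappa d$ is the typical size of $\norm{UZU^\ast-VZV^\ast}_1$. With only $\kappa=\Omega(1)$ you cover $\varepsilon<\kappa/2$, not the full range $\varepsilon\le 1/2$, so you must show $\kappa>1$. Since $UZU^\ast-VZV^\ast=2(P_U-P_V)$ has eigenvalues $\pm 2\sin\theta_i$ over the principal angles, the arcsine law for two random half-dimensional subspaces gives $\kappa\to 4/\pi$ (and $\kappa=4/3$ at $d=2$). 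Alternatively, use a lower-rank traceless perturbation, whose random pairs are nearly trace-orthogonal.

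(ii) Your diagnosis of the obstacle is off: the discrete route loses nothing. The hard family may be chosen after the algorithm's POVMs are fixed. So you can draw $U_1,\dots,U_N$ at random, note that the expected averaged $\chi^2$ equals its Haar value, and combine Markov's inequality with a union bound for the packing event. Even a bound uniform in $\psi$ is cheap: a $1/(4d)$-net of the sphere has size $e^{O(d\log d)}$, while Chernoff failure probabilities are $e^{-\Omega(N/d)}$ with $N=e^{\Omega(d^2)}$. So a net argument does not account for any logarithmic loss.

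(iii) Nonadaptivity gives conditional independence, not identical distribution, since different copies may use different POVMs. Subadditivity $I(J;\eta_{1:n})\le\sum_t I(J;\eta_t)$ holds regardless.
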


The sample complexity is highly dependent on the POVM used. 
In this paper, we will consider the POVM induced by \emph{complex projective $4$-designs}, whose \emph{POVM norm constant} will be useful in our analysis. 

\begin{definition}[Complex projective $t$-design]
Let $S_d \coloneqq \set{ x\in\mathbb{C}^d | x^\ast x=1}$, and let $\mathrm{d} \mathcal{S}_d$ denote its normalized unitarily invariant probability measure.
For any positive integer $t$, a complex projective $t$-design of dimension $d$ and cardinality $m$ \citep{Ambainis2007,Matthews2009,Scott2006} consists of a finite set $\set{ x_1, \ldots, x_m } \subset \mathcal{S}_d$ such that
\[
\sum_{i = 1}^m \frac{1}{m} ( x_i x_i^\ast )^{\otimes \, t} = \int_{x \in \mathcal{S}_d} ( x x^\ast )^{\otimes \, t} \, \du \mathcal{S}_d , 
\]
where $\otimes$ denotes the tensor product. 
The induced POVM is given by the set of operators 
\[
\set{ ( d / m ) x_1 x_1^\ast, \ldots, ( d / m ) x_m x_m^\ast } . 
\]
\end{definition}

\begin{definition}[POVM norm constant \citep{Kueng2016,Matthews2009}] \label{def_povm_norm_constant}
For any POVM $\mathcal{M} = \set{ M_\iota | \iota \in \mathcal{I} }$, where $\mathcal{I}$ is a finite index set, the associated \emph{POVM norm constant} is defined as
\[
\lambda_{\mathcal{M}} = \inf_{\rho \neq \sigma \atop \rho, \sigma \in \mathcal{D}_d } \frac{\sum_{i \in \mathcal{I} } \abs{ \tr \left( M_i \rho \right) - \tr \left( M_i \sigma \right) } }{ \norm{ \rho - \sigma }_1 } . 
\]
\end{definition}

\begin{theorem}[{\citet[(23)]{Matthews2009}}] \label{lem_povm_norm_constant}
The POVM norm constant of the POVM induced by complex projective $4$-designs is at least $1 / ( 3 \sqrt{d} )$. 
\end{theorem}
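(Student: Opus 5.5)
The plan is to reduce the bound to a moment comparison for the scalar random variable $X \coloneqq x^\ast \Delta x$, where $\Delta \coloneqq \rho - \sigma$ and $x$ is drawn uniformly from the design points $\set{x_1, \ldots, x_m}$. Three facts about $\Delta$ will be used: it is Hermitian, it is traceless, and $\norm{\Delta}_1 \leq \sqrt{d}\,\norm{\Delta}_2$ by Cauchy--Schwarz on the eigenvalues, where $\norm{\cdot}_2$ denotes the Frobenius norm.

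With $M_i = (d/m)\, x_i x_i^\ast$, the numerator in Definition~\ref{def_povm_norm_constant} is
\[
\sum_{i=1}^m \frac{d}{m} \abs{ x_i^\ast \Delta x_i } = d\, \mathsf{E} \abs{X} .
\]
To lower bound $\mathsf{E}\abs{X}$, I will use H\"older's inequality in the form $\mathsf{E} X^2 \leq (\mathsf{E}\abs{X})^{2/3} (\mathsf{E} X^4)^{1/3}$. This gives
\[
\mathsf{E}\abs{X} \geq \frac{(\mathsf{E} X^2)^{3/2}}{(\mathsf{E} X^4)^{1/2}} ,
\]
so it suffices to compute the second moment exactly and to upper bound the fourth moment.

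\textbf{Moments.} Since a $4$-design is also a $2$-design, both moments can be replaced by integrals over the uniform measure on $\mathcal{S}_d$. The key identity is
\[
\int (x x^\ast)^{\otimes k} \, \du \mathcal{S}_d = \frac{\sum_{\pi \in S_k} P_\pi}{d(d+1)\cdots(d+k-1)} ,
\]
where $P_\pi$ is the operator permuting the tensor factors according to $\pi$. Since $X^k = \tr\bigl( (x x^\ast)^{\otimes k} \Delta^{\otimes k} \bigr)$ and $\tr(P_\pi \Delta^{\otimes k}) = \prod_{\text{cycles } c \text{ of } \pi} \tr(\Delta^{\abs{c}})$, the moments follow by counting permutations. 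Any permutation with a fixed point contributes zero because $\tr \Delta = 0$.

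For $k=2$, only the transposition survives, so
\[
\mathsf{E} X^2 = \frac{\norm{\Delta}_2^2}{d(d+1)} .
\]
For $k=4$, the surviving permutations are the three of cycle type $(2,2)$ and the six $4$-cycles. Hence
\[
\mathsf{E} X^4 = \frac{3 (\tr \Delta^2)^2 + 6 \tr \Delta^4}{d(d+1)(d+2)(d+3)} \leq \frac{9 \norm{\Delta}_2^4}{d(d+1)(d+2)(d+3)} ,
\]
using $\tr \Delta^4 \leq (\tr \Delta^2)^2$.

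\textbf{Combining.} Substituting the two moment bounds into the H\"older bound gives
\[
d\, \mathsf{E}\abs{X} \geq \frac{\norm{\Delta}_2}{3} \cdot \frac{\sqrt{(d+2)(d+3)}}{d+1} \geq \frac{\norm{\Delta}_2}{3} \geq \frac{\norm{\Delta}_1}{3\sqrt{d}} .
\]
Taking the infimum over $\rho \neq \sigma$ yields the claimed bound $\lambda_{\mathcal{M}} \geq 1/(3\sqrt{d})$.

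The main obstacle is the fourth-moment computation: getting the Weingarten/permutation-sum formula for the symmetric-subspace projector right, and checking that the cycle-type bookkeeping leaves exactly the coefficients $3$ and $6$. Every other step is routine.
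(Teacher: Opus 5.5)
Your proof is correct: the H\"older (Berger fourth-moment) inequality $\mathsf{E}\abs{X} \geq (\mathsf{E} X^2)^{3/2} / (\mathsf{E} X^4)^{1/2}$ and the permutation-sum moment formulas all check out. Tracelessness of $\Delta$ leaves only the transposition for $k=2$, and only the three $(2,2)$-permutations and six $4$-cycles for $k=4$. Together with $\tr \Delta^4 \leq (\tr \Delta^2)^2$ and $\norm{\Delta}_1 \leq \sqrt{d}\,\norm{\Delta}_2$, this gives exactly $1/(3\sqrt{d})$. The paper does not prove this statement but cites \citet{Matthews2009}, whose argument is, as far as I recall, this same fourth-moment method with moments computed via the symmetric-subspace projector, so your route is essentially the standard one from the source.
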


\begin{remark}
\citet{Matthews2009} proved that the largest POVM norm constant is $\Theta ( 1 / \sqrt{d} )$. 
The POVM induced by complex projective $4$-designs is hence optimal in this sense. 
Indeed, \citet{Matthews2009} showed that the \emph{uniform POVM} also attains this optimal rate. 
Nevertheless, defining the uniform POVM formally requires the general measure-theoretic definition of POVM, as it has an infinite outcome space. 
For the purpose of this paper, it suffices to consider the POVM induced by complex projective $4$-designs, which has a finite outcome space. 
\end{remark}

Define the risk function as the expected negative log-likelihood: 
\[
R ( \rho ) \coloneqq \mathsf{E}_{\eta_1} \left[ L ( \rho; \eta_1 ) \right] .
\]
The excess risk of any estimate $\hat{\rho}_n$---not necessarily the maximum-likelihood one---is then given by
\[
\ER ( \hat{\rho}_n ) \coloneqq R ( \hat{\rho}_n ) - \min_{\rho \in \mathcal{D}_d} R ( \rho ) . 
\]
We will use the following inequality in analysing the sample complexity of the proposed online learning-based quantum state tomography algorithm. 

\begin{proposition} \label{prop_basic_inequality}
In any non-adaptive, single-copy quantum state tomography experiment with a fixed POVM $\mathcal{M}$ satisfying $\lambda_{\mathcal M} > 0$, any estimate $\hat{\rho}_n$ must satisfy
\[
\norm{ \hat{\rho}_n - \rho^\natural }_1 \leq \frac{\sqrt{ 2 \ER ( \hat{\rho}_n ) }}{\lambda_{\mathcal{M}}} . 
\]
\end{proposition}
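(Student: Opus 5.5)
The plan is to identify the excess risk with a Kullback--Leibler divergence between outcome distributions, and then apply Pinsker's inequality together with the definition of the POVM norm constant.

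For any $\rho \in \mathcal{D}_d$, let $p_\rho$ denote the probability vector on $\mathcal{I}$ with entries $p_\rho[\iota] = \tr(M_\iota \rho)$, as given by the Born rule \eqref{eq_born}. Since $\eta_1$ has distribution $p_{\rho^\natural}$, the risk is
\[
R(\rho) = - \sum_{\iota \in \mathcal{I}} p_{\rho^\natural}[\iota] \log p_\rho[\iota] .
\]
Hence
\[
R(\rho) - R(\rho^\natural) = \KL(p_{\rho^\natural}, p_\rho) \geq 0 ,
\]
with the convention $0 \log 0 = 0$; if $p_\rho[\iota] = 0$ for some $\iota$ with $p_{\rho^\natural}[\iota] > 0$, both sides equal $+\infty$. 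By Gibbs' inequality, $\rho^\natural$ minimizes $R$ over $\mathcal{D}_d$, so $\min_{\rho} R(\rho) = R(\rho^\natural)$. Therefore
\[
\ER(\hat{\rho}_n) = \KL(p_{\rho^\natural}, p_{\hat{\rho}_n}) .
\]
The only care needed here is that $\min_\rho R(\rho)$ is attained at $\rho^\natural$. This is immediate from Gibbs' inequality, because the minimum of cross-entropy is the entropy of the true distribution, and this distribution is realized by $\rho^\natural$.

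Next, Pinsker's inequality gives
\[
\norm{p_{\rho^\natural} - p_{\hat{\rho}_n}}_1 \leq \sqrt{2 \KL(p_{\rho^\natural}, p_{\hat{\rho}_n})} = \sqrt{2 \ER(\hat{\rho}_n)} .
\]
By Definition~\ref{def_povm_norm_constant}, applied with $\rho = \hat{\rho}_n$ and $\sigma = \rho^\natural$ whenever $\hat{\rho}_n \neq \rho^\natural$, we have
\[
\lambda_{\mathcal{M}} \norm{\hat{\rho}_n - \rho^\natural}_1 \leq \sum_{\iota \in \mathcal{I}} \abs{\tr(M_\iota \hat{\rho}_n) - \tr(M_\iota \rho^\natural)} = \norm{p_{\hat{\rho}_n} - p_{\rho^\natural}}_1 .
\]
The case $\hat{\rho}_n = \rho^\natural$ is trivial. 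Dividing by $\lambda_{\mathcal{M}} > 0$ and combining the two displays yields the claim. The bound is deterministic, so it holds for every realization of the estimate; this uses that $\hat{\rho}_n$ takes values in $\mathcal{D}_d$.

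There is no substantial obstacle in this argument. The main point is recognizing the KL identity for the excess risk; once that is in place, the rest is Pinsker's inequality followed by the norm-constant inequality.
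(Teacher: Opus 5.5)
Your proposal is correct and follows the paper's proof: identify $\ER(\hat{\rho}_n)$ with $\KL(p_{\rho^\natural}, p_{\hat{\rho}_n})$, then combine Pinsker's inequality with the definition of $\lambda_{\mathcal{M}}$. Your Gibbs'-inequality argument that $\min_{\rho} R(\rho) = R(\rho^\natural)$, together with your handling of the infinite-divergence and $\hat{\rho}_n = \rho^\natural$ cases, simply spells out the ``direct calculation'' that the paper leaves implicit.
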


In particular, if the POVM is the one induced by complex projective $4$-designs, then 
\[
\norm{ \hat{\rho}_n - \rho^\natural }_1 \leq 3 \sqrt{ 2 d \ER ( \hat{\rho}_n ) } . 
\]
Proposition \ref{prop_basic_inequality} should be familiar to experts in related fields. 
Nevertheless, we do not find an appropriate reference, so we provide a proof for completeness. 

\begin{proof}[Proposition \ref{prop_basic_inequality}]
A direct calculation gives
\[
\ER ( \hat{\rho}_n ) = \KL ( p_{\rho^\natural} , p_{\hat{\rho}_n} ) , 
\]
where $p_\rho$ denotes the probability distribution of the measurement outcome when the measured density matrix is $\rho$. 
That is, $p_\rho \coloneqq ( \tr ( M_\iota \rho ) )_{\iota \in \mathcal{I}}$, which is a probability vector in $\mathbb{R}^{\abs{ \mathcal{I} }}$. 
By Pinsker's inequality and the definition of POVM norm constants (Definition \ref{def_povm_norm_constant}), we write 
\[
\lambda_{\mathcal{M}} \norm{ \hat{\rho}_n - \rho^\natural }_1 \leq \norm{ p_{\hat{\rho}_n} - p_{\rho^\natural} }_1 \leq \sqrt{ 2 \ER ( \hat{\rho}_n ) } ,
\]
which proves the proposition. 
\end{proof}

\section{Regret Lower Bound for Online Learning of Quantum States and a Quantum State Tomography Algorithm} \label{sec_online_qst}

This section proposes an online learning approach to quantum state tomography. 
According to Proposition \ref{prop_basic_inequality}, the estimation error is controlled by the excess risk; 
hence, it appears sufficient to implement an OLQS algorithm followed by an online-to-batch conversion, which translates the algorithm's regret guarantee into a high-probability excess risk guarantee. 
Nevertheless, it is easily checked that the logarithmic loss $f_t ( \rho ) \coloneqq - \log \tr ( A_t \rho )$ in OLQS can be arbitrarily large near the boundary of $\mathcal{D}_d$, so standard online-to-batch conversion based on the Azuma-Hoeffding inequality does not directly apply. 
Moreover, the logarithmic loss is $1$-exp-concave, so we expect to achieve a fast rate of $\tilde{O} ( 1 / T )$ for the excess risk \citep{Koren2015,Mehta2017}. 
Recall the definition of exp-concavity: 

\begin{definition}[Exp-concavity \citep{Hazan2007,Kivinen1999}]
A function $f$ is called $\eta$-exp-concave if the function $\eu^{- \eta f}$ is concave. 
\end{definition}

We show below that the online-to-batch conversion scheme for exp-concave losses proposed by \citet{Mehta2017} suffices. 
Although this online-to-batch conversion scheme requires the loss functions to be bounded, this issue can be solved if we shrink the density matrices slightly toward the normalized identity matrix $I_d / d$, called the maximally mixed state in quantum information theory. 
While \citet{Foster2018} also use this shrinkage in a computationally more expensive online-to-batch conversion based on the work of \citet{Mehta2017}, we show that it is unnecessary to use their scheme. 

For any $\rho \in \mathcal{D}_d$, define the shrunk density matrix
\[
\tilde{\rho} \coloneqq \left( 1 - \frac{1}{n} \right) \rho + \frac{I_d}{n d} . 
\] 
Also define the set of shrunk density matrices: 
\[
\tilde{\mathcal{D}}_{d} \coloneqq \set{ \tilde{\rho} | \rho \in \mathcal{D}_d } . 
\]

Consider the following online learning-based scheme for quantum state tomography. 
\begin{itemize}
\item Let there be $n$ rounds in the OLQS game. 
\item Let the matrices $A_t$ in the OLQS be set as $M_{\eta_t} / \tr ( M_{\eta_t} )$, where $\eta_t$ are measurement outcomes as defined in the previous section. 
      Note that trace normalization does not affect the regret value since the loss functions are logarithmic. 
\item Implement an online learning algorithm in this OLQS game, and obtain iterates $\rho_1, \rho_2, \ldots, \rho_n$. 
\item Output the empirical average of shrunk iterates $\hat{\rho}_n \coloneqq ( \tilde{\rho}_1 + \tilde{\rho}_2 + \cdots + \tilde{\rho}_n ) / n$. 
\end{itemize}

\begin{lemma} \label{lem_surrogate_error}
Suppose that $n \geq 2$. 
For any $\rho \in \mathcal{D}_d$ such that $\tr ( A_t \rho ) > 0$ for all $t$, it holds that 
\[
f_t ( \tilde{\rho} ) - f_t ( \rho ) \leq \frac{1}{n - 1} , \quad \text{for all } 1 \leq t \leq n , 
\]
and
\[
\min_{\tilde{\rho} \in \tilde{\mathcal{D}}_d} R ( \tilde{\rho} ) - \min_{\rho \in \mathcal{D}_d} R ( \rho ) \leq \frac{1}{n - 1} . 
\]
\end{lemma}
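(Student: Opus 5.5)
The plan is to get a pointwise lower bound on $\tr(A_t\tilde\rho)$ in terms of $\tr(A_t\rho)$ and then take logarithms. By linearity of the trace,
\[
\tr(A_t\tilde\rho) = \left(1-\frac1n\right)\tr(A_t\rho) + \frac{\tr(A_t)}{nd} \geq \left(1-\frac1n\right)\tr(A_t\rho),
\]
because $A_t$ is positive semidefinite, so $\tr(A_t)\geq 0$. Since $\tr(A_t\rho)>0$ and $n\geq 2$, both sides are positive. Taking $-\log$ gives
\[
f_t(\tilde\rho)-f_t(\rho) \leq -\log\left(1-\frac1n\right) = \log\frac{n}{n-1} = \log\left(1+\frac{1}{n-1}\right) \leq \frac{1}{n-1},
\]
using $\log(1+x)\leq x$. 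This proves the first claim. No use is made of the bound on the eigenvalues of $A_t$ beyond positive semidefiniteness.

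For the second claim, I would apply the same argument to each POVM element. For every $\iota\in\mathcal I$ and every $\rho\in\mathcal D_d$ we have $\tr(M_\iota\tilde\rho)\geq(1-1/n)\tr(M_\iota\rho)$, hence
\[
L(\tilde\rho;\iota)\leq L(\rho;\iota)+\frac{1}{n-1}
\]
whenever $\tr(M_\iota\rho)>0$. If instead $\tr(M_\iota\rho)=0$, then $L(\rho;\iota)=+\infty$ and the inequality holds trivially. Taking expectations over $\eta_1$ yields $R(\tilde\rho)\leq R(\rho)+1/(n-1)$ for every $\rho\in\mathcal D_d$.

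Next, let $\rho^\star$ be a minimizer of $R$ over $\mathcal D_d$. It has finite risk, for instance because $R(\rho^\natural)<\infty$: outcomes with $\tr(M_\iota\rho^\natural)=0$ occur with probability zero. Its shrunk version lies in $\tilde{\mathcal D}_d$, so
\[
\min_{\tilde{\mathcal D}_d}R \leq R(\tilde\rho^\star) \leq R(\rho^\star)+\frac{1}{n-1}.
\]

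The only mild subtlety is that these minima exist. $R$ is lower semicontinuous and convex on the compact sets $\mathcal D_d$ and $\tilde{\mathcal D}_d$, where $\tilde{\mathcal D}_d$ is the image of $\mathcal D_d$ under a continuous affine map. One can also simply phrase the argument with infima, and the same bound follows. I do not expect any real obstacle here; the whole proof reduces to the elementary inequality $\log\frac{n}{n-1}\leq\frac{1}{n-1}$.
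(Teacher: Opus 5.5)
Your proof is correct and follows the paper's argument: you lower-bound $\tr(A_t\tilde\rho)\geq(1-1/n)\tr(A_t\rho)$ using $\tr(A_t)\geq 0$, then apply $\log\frac{n}{n-1}\leq\frac{1}{n-1}$, and for the risk you apply the same pointwise bound at a minimizer of $R$. You also spell out details the paper leaves implicit: applying the bound to every POVM outcome rather than only the observed $A_t$, handling outcomes with $\tr(M_\iota\rho)=0$, and checking that the minima exist.
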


\begin{proof}
Let $\mu = 1 / n$. 
We write 
\[
f_t ( \tilde{\rho} ) - f_t ( \rho ) = \log \left( \frac{ \tr ( A_t \rho ) }{ ( 1 - \mu ) \tr ( A_t \rho ) + \frac{\mu}{d} \tr ( A_t ) } \right) \leq \log \left( \frac{1}{1 - \mu} \right) .  
\]
By the inequality above, we write 
\[
\min_{\tilde{\rho} \in \tilde{\mathcal{D}}_d} R ( \tilde{\rho} ) \leq R ( \rho ) + \log \left( \frac{1}{1 - \mu} \right) . 
\]
The lemma then follows from the fact that 
\[
\log \left( \frac{1}{1 - \mu} \right) = \log \left( \frac{n}{n - 1} \right) = \log \left( 1 + \frac{1}{n - 1} \right) \leq \frac{1}{n - 1} . 
\]
\end{proof}

Lemma \ref{lem_surrogate_error} implies that
\[
\widetilde{\regret}_n \coloneqq \sum_{t = 1}^n f_t ( \tilde{\rho}_t ) - \min_{\tilde{\rho} \in \tilde{\mathcal{D}}_d } \sum_{t = 1}^n f_t ( \tilde{\rho} ) \leq \regret_n ( \tilde{\rho} ) + \frac{n }{n - 1} . 
\]
Define $\widetilde{\regret}_n^{+} \coloneq \max\set{ \widetilde{\regret}_n, 0 }$. 
Note that the loss functions $f_t$ are bounded by $\log ( n d )$ and $1$-exp-concave over $\tilde{D}_d$. 
Applying the online-to-batch conversion scheme of \citet[Corollary 2]{Mehta2017} and Lemma \ref{lem_surrogate_error}, we obtain that for any $n \geq 3$, 
\begin{align*} 
& R \left( \frac{\tilde{\rho}_1 + \cdots + \tilde{\rho}_n}{n} \right) - \min_{\rho \in \mathcal{D}_d} R ( \rho ) \\
& \quad = R \left( \frac{\tilde{\rho}_1 + \cdots + \tilde{\rho}_n}{n} \right) - \min_{\tilde{\rho} \in \tilde{\mathcal{D}}_d} R ( \tilde{\rho} ) + \min_{\tilde{\rho} \in \tilde{\mathcal{D}}_d} R ( \tilde{\rho} ) - \min_{\rho \in \mathcal{D}_d} R ( \rho ) \\
& \quad \leq \frac{\widetilde{\regret}_n^+}{n} + 4 \frac{\sqrt{ \widetilde{\regret}_n^+ }}{n} \sqrt{ \left( 1 + \log \left( n d \right) \right) \log \left( \frac{4 \log n}{\delta} \right) } + \\
& \quad \quad \, \, 16 \left( 1 + \log \left( n d \right) \right) \frac{\log \left( \frac{4 \log n}{\delta} \right)}{n} + \frac{1}{n - 1} . 
\end{align*}
with probability at least $1 - \delta$. 

Setting $\delta = 1 / 3$ and plugging in the regret rate bounds for Exponential Weights and Q-VB-FTRL, we immediately obtain the following high-probability estimation error bound, which is nearly optimal according to Theorem \ref{thm_optimal_sample_complexity}. 

\begin{theorem} \label{thm_estimation_error_bound}
Suppose that $n \gg d$. 
If $n \geq 3$, the proposed online learning-based scheme for quantum state tomography with either Exponential Weights or Q-VB-FTRL, initialized at $\rho_1 = I_d / d$, yields an estimate $\hat{\rho}_n$ that satisfies 
\begin{align*}
\norm{ \hat{\rho}_n - \rho^\natural }_1 = \tilde{O} \left( \sqrt{ \frac{d^3 \log n}{n} } \right)
\end{align*}
with probability at least $2 / 3$. 
Here, the notation $\tilde{O}$ suppresses polylogarithmic factors in $\log (n)$. 

In addition, the resulting algorithm is sample-optimal up to polylogarithmic factors in $n$. 
\end{theorem}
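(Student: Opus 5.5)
We plug the known regret bounds into the displayed high-probability excess-risk bound and then apply Proposition~\ref{prop_basic_inequality} with the $4$-design POVM.

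\textbf{Step 1: bound the shrunk regret.} Exponential Weights has $\regret_n = O(d^2 \log n)$ and Q-VB-FTRL has $\regret_n = O(d^2 \log(d+n))$. Since $n \gg d$, both are $O(d^2 \log n)$ against every comparator in $\mathcal{D}_d$, and hence against every shrunk comparator $\tilde\rho$. Lemma~\ref{lem_surrogate_error} controls $f_t(\tilde\rho_t) - f_t(\rho_t) \le 1/(n-1)$, and the minimum over $\tilde{\mathcal D}_d$ is at least the minimum over $\mathcal D_d$. Therefore
\[
\widetilde{\regret}_n^+ \le \regret_n + \frac{n}{n-1} = O(d^2 \log n).
\]

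\textbf{Step 2: obtain the excess risk.} Substitute this into the online-to-batch bound with $\delta = 1/3$. Here $\log(4\log n/\delta) = O(\log\log n)$, and $1 + \log(nd) = O(\log n)$ because $n \gg d$. The three terms become:
\begin{itemize}
\item the leading term $O(d^2 \log n / n)$;
\item the cross term $O\bigl(\sqrt{d^2\log n}\,\sqrt{\log n \log\log n}/n\bigr) = O(d \log n \sqrt{\log\log n}/n)$;
\item the last term $O(\log n \log\log n / n)$.
\end{itemize}
Together with $1/(n-1)$, this gives, with probability at least $2/3$,
\[
\ER(\hat\rho_n) = \tilde O\left(\frac{d^2 \log n}{n}\right),
\]
where $\tilde O$ absorbs the $\log\log n$ factors.

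\textbf{Step 3: convert to trace-norm error.} By Proposition~\ref{prop_basic_inequality} and Theorem~\ref{lem_povm_norm_constant},
\[
\norm{\hat\rho_n - \rho^\natural}_1 \le 3\sqrt{2 d\, \ER(\hat\rho_n)} = \tilde O\left(\sqrt{d^3 \log n / n}\right).
\]

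\textbf{Step 4: sample optimality.} Set the error bound equal to $\varepsilon$. This gives
\[
n = \tilde O\left(\frac{d^3 \log n}{\varepsilon^2}\right),
\]
that is, $n = O\bigl(d^3 \varepsilon^{-2}\operatorname{polylog}(d/\varepsilon)\bigr)$. This matches the $\Omega(d^3/\varepsilon^2)$ lower bound of Theorem~\ref{thm_optimal_sample_complexity} up to polylogarithmic factors in $n$.

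\textbf{Main obstacle.} The delicate part is checking that the hypotheses of \citet[Corollary 2]{Mehta2017} hold on $\tilde{\mathcal D}_d$:
\begin{itemize}
\item The losses must be bounded by $\log(nd)$. This holds because $\tr(A_t\tilde\rho) \ge \tr(A_t)/(nd) = 1/(nd)$, using $\tr A_t = 1$ after normalization.
\item The losses must be exp-concave, which holds since $\tr(A_t\rho)$ is linear in $\rho$.
\item The iterates $\tilde\rho_t$ are $\mathcal F_{t-1}$-measurable. They are, because the measurements are nonadaptive and i.i.d.
\end{itemize}
We must also make sure that the regret bounds of Exponential Weights and Q-VB-FTRL hold uniformly for the normalized $A_t$, which follows by scale invariance of the logarithmic loss.
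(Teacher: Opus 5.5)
Your proposal is correct and follows the paper's argument, with the bookkeeping the paper leaves implicit written out: you bound $\widetilde{\regret}_n^+$ by $O(d^2\log n)$ using Lemma~\ref{lem_surrogate_error} and the inclusion $\tilde{\mathcal D}_d\subseteq\mathcal D_d$, substitute this into the displayed online-to-batch bound with $\delta=1/3$, convert excess risk to trace-norm error via Proposition~\ref{prop_basic_inequality} with the $4$-design constant $1/(3\sqrt d)$, and compare with Theorem~\ref{thm_optimal_sample_complexity}. One minor correction: $\tilde\rho_t$ is $\mathcal F_{t-1}$-measurable because of the online protocol, since it is chosen before $A_t$ is revealed, not because the measurements are nonadaptive; the i.i.d.\ assumption is what makes the conditional expected loss equal $R(\tilde\rho_t)$.
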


Now, we are ready to prove Theorem \ref{thm_main}. 

\begin{proof}[Theorem \ref{thm_main}]
Suppose that there exists an OLQS algorithm with a regret rate of $O ( d^\gamma \log T )$ for some $\gamma < 2$. 
Then, adopting this algorithm with the arguments above, there exists a quantum state tomography algorithm that achieves an estimation error of $\tilde{O} ( \sqrt{ \left( d^{1 + \gamma} \log n \right) / n } )$ with probability at least $2/3$, which violates the known sample complexity lower bound in Theorem \ref{thm_optimal_sample_complexity}. 
\end{proof}


\begin{remark} \label{rem_optimal_regret}
Note that Theorem \ref{thm_main} does not determine the minimax regret rate of OLQS.
In particular, it rules out regret bounds of the form $O(d^\gamma \log T)$ for $\gamma<2$, but does not establish a matching $\Omega(d^2 \log T)$ lower bound for the regret bound achieved by Exponential Weights.
Since OLQS generalizes online portfolio selection, whose minimax regret rate is $\Theta(d \log T)$, the minimax regret of OLQS is also $\Omega(d \log T)$.
However, this lower bound, even when combined with Theorem \ref{thm_main}, is insufficient to establish that the minimax regret rate of OLQS is $\Theta(d^2 \log T)$.
We leave the determination of the exact minimax regret rate of OLQS to future work.
\end{remark}

Consider a real version of the OLQS game, in which the matrices $A_t$ are real and the action set $\mathcal{D}_d$ is replaced by $\mathcal{D}_d^{\mathbb{R}}$, the set of real symmetric positive semidefinite matrices with unit trace. 

\begin{corollary} \label{cor_main_real}
Theorem \ref{thm_main} continues to hold for the real version of the OLQS game. 
\end{corollary}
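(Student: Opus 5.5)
The plan is to reduce the complex OLQS game in dimension $d$ to the real OLQS game in dimension $2d$. Then any real algorithm with regret $O(d^\gamma \log T)$, $\gamma<2$, would give a complex algorithm with regret $O((2d)^\gamma \log T) = O(d^\gamma \log T)$, which contradicts Theorem~\ref{thm_main}. This avoids redoing the tomography argument with real states, for which Theorem~\ref{thm_optimal_sample_complexity} would need a real analogue. The reduction uses the standard real representation of complex matrices. For a Hermitian $H = X + \mathrm{i} Y$ with $X$ real symmetric and $Y$ real antisymmetric, set
\[
\phi(H) \coloneqq \begin{pmatrix} X & -Y \\ Y & X \end{pmatrix} \in \mathbb{R}^{2d \times 2d}.
\]

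First I will record the basic facts about $\phi$. It is real-linear and multiplicative. Its image is exactly the set of real symmetric matrices commuting with $J \coloneqq \begin{pmatrix} 0 & -I_d \\ I_d & 0 \end{pmatrix}$. The spectrum of $\phi(H)$ is that of $H$, with each eigenvalue doubled in multiplicity. Finally, $\tr(\phi(A)\phi(B)) = 2 \operatorname{Re} \tr(AB) = 2\tr(AB)$ for Hermitian $A, B$.

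From these facts, Nature's complex matrix $A_t$ maps to $\phi(A_t)$. This matrix is real symmetric, nonzero, and positive semidefinite with eigenvalues in $\interval{0}{1}$, so it is a legal real move. For every complex density matrix $\rho$, the matrix $\phi(\rho)/2$ lies in $\mathcal{D}_{2d}^{\mathbb{R}}$ and satisfies $\tr(\phi(A_t)\phi(\rho)/2) = \tr(A_t\rho)$.

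Second, I will convert the real learner's plays back into complex density matrices. Given a real play $S_t \in \mathcal{D}_{2d}^{\mathbb{R}}$, define $\bar S_t \coloneqq (S_t + J S_t J^\transp)/2$. Since $J$ is orthogonal, $\bar S_t$ is real symmetric, positive semidefinite, and has unit trace. Because $J^2 = -I$, it commutes with $J$, so $\bar S_t = \phi(\rho_t)/2$ for a unique Hermitian $\rho_t$. The spectral fact then shows $\rho_t \in \mathcal{D}_d$. Since $\phi(A_t)$ commutes with $J$, we also get $\tr(\phi(A_t) J S_t J^\transp) = \tr(\phi(A_t) S_t)$. Hence the complex learner playing $\rho_t$ incurs exactly the loss $-\log\tr(\phi(A_t) S_t)$ of the real learner.

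Third, I will compare the two regrets. The real comparator set contains $\{\phi(\rho)/2 \mid \rho\in\mathcal{D}_d\}$, so the real benchmark $\min_S \sum_t -\log\tr(\phi(A_t)S)$ is at most the complex benchmark. Combined with the equal cumulative losses, the complex regret is at most the real regret. Since $T \gg d^2\log d$ is equivalent to $T \gg (2d)^2 \log(2d)$ up to constants, the hypothesis of Theorem~\ref{thm_main} is preserved, and the contradiction follows.

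I expect the only delicate step to be the second one: checking that the symmetrized matrix $\bar S_t$ really lies in the image of $\phi$ and corresponds to a genuine density matrix. The key identities are $J^2=-I$, which gives $J\bar S_t = \bar S_t J$, and the characterization of the image of $\phi$ as the commutant of $J$. I will verify these by a direct block computation.
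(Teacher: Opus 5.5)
Your proposal is correct and is essentially the paper's proof. It uses the same realification map to reduce the complex game in dimension $d$ to the real game in dimension $2d$, and your $J$-symmetrized play $\bar S_t = \phi(\rho_t)/2$ recovers exactly the paper's complexification $\rho_t = \Psi(S_t) = (P+R) - \mathrm{i}(Q - Q^\transp)$, where $S_t$ has blocks $P$, $Q$, $Q^\transp$, $R$. The differences are cosmetic: you verify positivity via the doubled spectrum of $\phi$ rather than the test vector $(v, -\mathrm{i} v)$, and you use only the one-sided comparator inequality, which suffices, whereas the paper shows the two benchmarks coincide via surjectivity of $\Psi$.
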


The corollary will be used in the next section to justify our approach to constructing confidence sequences for estimating the mean of a real random matrix.
Its proof, deferred to Appendix \ref{app_main_real}, shows that any algorithm for the real version in dimension $2d$ induces an algorithm for the original OLQS game in dimension $d$ whose worst-case regret is no greater than that of the real algorithm.

\section{Confidence Sequence for Random Matrix Mean Estimation} \label{sec_confidence_sequence}

Let $(\xi_t)_{t \in \mathbb{N}}$ be a stochastic process satisfying
$\mathsf{E}[\xi_t \mid \xi_1, \ldots, \xi_{t-1}] = \mu^\natural$
for every $t \in \mathbb{N}$, where $\mu^\natural$ is an unknown parameter.
We say that a sequence of sets $(\mathcal{C}_t)_{t \in \mathbb{N}}$ is a \emph{$(1-\alpha)$-confidence sequence} for $\mu^\natural$ if each $\mathcal{C}_t$ depends only on $\xi_1, \ldots, \xi_t$ and
\[
\mathsf{P}\left(\mu^\natural \in \mathcal{C}_t
\text{ for all } t \in \mathbb{N}\right) \geq 1-\alpha.
\]
Thus, confidence sequences may be viewed as an anytime-valid extension of confidence sets.
The reader is referred to, for example, the works of \citet{Howard2021}, \citet{Ramdas2022}, and \citet{Ramdas2025a} for historical reviews and expositions of related concepts.

We are particularly interested in a relatively recent line of research that constructs confidence sequences by associating the task with online portfolio selection \citep{Orabona2024,Ryu2024,Ryu2024a}. 
\citet{Orabona2024} and \citet{Ryu2024a} considered only the case where $\xi_t$ are scalar-valued random variables, as their constructions explicitly rely on special properties of Universal Portfolios in the 2-dimensional case.
\citet{Ryu2024} considered the case where $\xi_t$ are vector-valued random variables by reducing the construction of confidence sequences to solving general online portfolio selection problems. 

We aim to extend this line of research to the case where $\xi_t$ are matrix-valued random variables.
Although confidence sequences for the mean of a random matrix can be constructed by vectorizing the observations and applying the algorithm of \citet{Ryu2024},
the numerical result in Section \ref{sec_numerical} shows that the resulting confidence sequences yield larger estimated confidence-set volumes than our construction.

Suppose that $\xi_t$ are $\mathcal{D}_d$-valued random variables. 
Define the following ``matrix division'' between $\xi_t$ and $\mu$: 
\[
\xi_t ( \mu ) \coloneqq \mu^{-\dagger/2} \xi_t \mu^{-\dagger/2} , \quad \text{for all } \mu \in \mathcal{D}_d , 
\]
where $\mu^{-\dagger / 2}$ denotes the square root of the Moore-Penrose generalized inverse of the matrix $\mu$. 

For every $\mu \in \mathcal{D}_d$, consider the OLQS game in which Nature announces $A_t = \xi_t ( \mu )$ in round $t$, and denote the output of Physicist in round $t$ by $\rho_t ( \mu )$. 
Consider the stochastic process $( W_t ( \mu ) )_{t \geq 0}$ such that $W_0 = 1$ and 
\[
W_t ( \mu ) = W_{t - 1} ( \mu ) \tr \left( \xi_t ( \mu ) \rho_t ( \mu ) \right) , \quad \text{for all } t \in \mathbb{N}, 
\]
which is non-negative because both $\xi_t ( \mu )$ and $\rho_t ( \mu )$ are positive semi-definite. 
Since $\rho_{t + 1} ( \mu^\natural )$ is completely determined by $\xi_1, \ldots, \xi_t$, we have 
\begin{align*}
\mathsf{E} \left[ W_{t + 1} ( \mu^\natural ) \, \middle\vert \, \xi_1, \ldots, \xi_t \right]
& = W_t ( \mu^\natural ) \tr \left( \left( \mu^\natural \right)^{-\dagger / 2} \mu^\natural \left( \mu^\natural \right)^{-\dagger / 2} \rho_{t + 1} ( \mu^\natural ) \right) \\
& \leq W_t ( \mu^\natural ) . 
\end{align*}
Hence, $( W_t ( \mu^\natural ) )_{t \geq 0}$ is a non-negative supermartingale. 

\begin{theorem}[Ville's inequality \citep{Ville1939}]
Let $( \eta_t )_{t \geq 0}$ be a non-negative supermartingale. Then, it holds that 
\[
\mathsf{P} \left( \sup_{t \geq 0} \eta_t \geq c \right) \leq \frac{\mathsf{E} \left[ \eta_0 \right]}{c} , \quad \text{for any } c > 0 . 
\]
\end{theorem}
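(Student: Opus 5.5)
The plan is the standard exponential-inequality argument (Doob's maximal inequality for supermartingales), carried out with a stopping time so that the supremum over an infinite horizon is handled cleanly. Fix $c > 0$ and define the stopping time
\[
\tau \coloneqq \inf\set{ t \geq 0 | \eta_t \geq c } ,
\]
with the convention $\inf \emptyset = \infty$. For each fixed horizon $N \in \mathbb{N}$, consider the bounded stopping time $\tau \wedge N$. Since $(\eta_t)$ is a supermartingale, the stopped process $(\eta_{t \wedge \tau})_{t \geq 0}$ is again a supermartingale. Hence, by the optional stopping theorem for bounded stopping times,
\[
\mathsf{E}\left[ \eta_{\tau \wedge N} \right] \leq \mathsf{E}\left[ \eta_0 \right] .
\]

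Next I would lower bound the left-hand side. On the event $\set{\tau \leq N}$ we have $\eta_{\tau \wedge N} = \eta_\tau \geq c$. On the complement, $\eta_{\tau \wedge N} = \eta_N \geq 0$ by non-negativity. Therefore
\[
c \, \mathsf{P}\left( \max_{0 \leq t \leq N} \eta_t \geq c \right) = c \, \mathsf{P}(\tau \leq N) \leq \mathsf{E}\left[\eta_{\tau \wedge N}\right] \leq \mathsf{E}\left[\eta_0\right] .
\]
This is the only place non-negativity is used, and it is essential: without it the contribution from $\set{\tau > N}$ could be negative.

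Finally, the events $\set{\max_{0 \leq t \leq N} \eta_t \geq c}$ increase in $N$ to $\set{\exists t : \eta_t \geq c}$. Continuity of probability then gives $\mathsf{P}(\exists t: \eta_t \geq c) \leq \mathsf{E}[\eta_0]/c$.

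The one delicate point, and the main obstacle, is that $\set{\sup_t \eta_t \geq c}$ can be strictly larger than $\set{\exists t : \eta_t \geq c}$, because the supremum may equal $c$ without ever being attained. I would handle this by applying the bound to every $c' < c$: the event $\set{\sup_t \eta_t \geq c}$ is contained in $\set{\exists t : \eta_t \geq c'}$, so its probability is at most $\mathsf{E}[\eta_0]/c'$. Letting $c' \uparrow c$ yields the stated inequality.

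Apart from this, the argument only needs the optional stopping theorem for bounded stopping times. That step is routine in discrete time: writing $\eta_{\tau \wedge N} = \eta_0 + \sum_{t=1}^N \mathbf{1}\set{\tau \geq t}(\eta_t - \eta_{t-1})$, each increment has non-positive conditional expectation, since $\set{\tau \geq t}$ is determined by $\eta_0, \ldots, \eta_{t-1}$.
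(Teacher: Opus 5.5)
The paper gives no proof of this statement. It quotes Ville's inequality from \citet{Ville1939} and uses it only to certify that $(\mathcal{C}_t)_{t \in \mathbb{N}}$ is a confidence sequence, so there is no in-paper argument to compare against.

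Your proposal is the standard proof, and it is correct.
\begin{itemize}
\item The stopped process gives the finite-horizon maximal inequality $c \, \mathsf{P}(\max_{0 \leq t \leq N} \eta_t \geq c) \leq \mathsf{E}[\eta_0]$. Non-negativity is used exactly once, to discard the contribution of $\{\tau > N\}$.
\item Continuity from below passes this bound to the event $\{\exists t : \eta_t \geq c\}$.
\item Your $c' \uparrow c$ step correctly handles the fact that $\{\sup_t \eta_t \geq c\}$ can be strictly larger when the supremum is not attained.
\item Your elementary justification of optional stopping is sound. It rests on the decomposition $\eta_{\tau \wedge N} = \eta_0 + \sum_{t=1}^N \mathbf{1}\{\tau \geq t\}(\eta_t - \eta_{t-1})$ and on the fact that $\{\tau \geq t\}$ is known at time $t-1$. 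Integrability of each term follows from the definition of a supermartingale.
\end{itemize}

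The only implicit assumption is that time is discrete. You use that the infimum defining $\tau$ is attained, so that $\eta_\tau \geq c$, and you telescope over integer steps. This is exactly the setting in which the paper applies the inequality, namely to $(W_t(\mu^\natural))_{t \geq 0}$ with $t$ ranging over the non-negative integers. A continuous-time version would additionally require path regularity, such as right-continuity.
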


Applying Ville's inequality to the stochastic process $( W_t ( \mu^\natural ) )_{t \geq 0}$, we immediately get the following theorem. 

\begin{theorem}
The sequence $( \mathcal{C}_t )_{t \in \mathbb{N}}$ defined as
\begin{equation}
\mathcal{C}_t = \Set{ \mu \in \mathcal{D}_d | W_t ( \mu ) < \frac{1}{\alpha} } ,  \label{eq_confidence_sequence}
\end{equation}
is a $( 1 - \alpha )$-confidence sequence for $\mu^\natural$. 
\end{theorem}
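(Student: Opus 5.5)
The proof is a direct application of Ville's inequality to the process $( W_t ( \mu^\natural ) )_{t \geq 0}$, after a check that $\mu^\natural$ is an admissible parameter.

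\emph{Admissibility of $\mu^\natural$.} Each $\xi_t$ takes values in $\mathcal{D}_d$, and $\mu^\natural = \mathsf{E}[\xi_t \mid \xi_1,\ldots,\xi_{t-1}]$ is an average of density matrices. Hence $\mu^\natural \in \mathcal{D}_d$, and $\mu^\natural$ is one of the candidates over which $\mathcal{C}_t$ in \eqref{eq_confidence_sequence} is defined.

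\emph{Measurability.} For each fixed $\mu$, the matrices $\xi_1(\mu),\ldots,\xi_t(\mu)$ are functions of $\xi_1,\ldots,\xi_t$ only. The iterates $\rho_1(\mu),\ldots,\rho_t(\mu)$ are produced by the OLQS algorithm from the past $A_s = \xi_s(\mu)$. Therefore $W_t(\mu)$, and with it the set $\mathcal{C}_t$, depends only on $\xi_1,\ldots,\xi_t$, as the definition of a confidence sequence requires.

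\emph{Supermartingale property.} I take the computation preceding Ville's inequality as given. The one point to verify is the inequality
\[
\tr\bigl( (\mu^\natural)^{-\dagger/2}\mu^\natural(\mu^\natural)^{-\dagger/2}\rho_{t+1}(\mu^\natural) \bigr) \leq 1 .
\]
The matrix $(\mu^\natural)^{-\dagger/2}\mu^\natural(\mu^\natural)^{-\dagger/2}$ is the orthogonal projector $P$ onto the range of $\mu^\natural$. Since $\rho_{t+1}(\mu^\natural)$ is a density matrix, $\tr(P\rho_{t+1}) \leq \tr \rho_{t+1} = 1$.

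There is a caveat: the OLQS losses require $\xi_t(\mu) \neq 0$. I would handle this by allowing $W_t(\mu)$ to become zero, which keeps the process non-negative and preserves the supermartingale inequality. Alternatively, almost surely $\xi_t$ lies in the support of $\mu^\natural$, so that $\xi_t(\mu^\natural) \neq 0$ almost surely. Either way, $( W_t ( \mu^\natural ) )_{t \geq 0}$ is a non-negative supermartingale with $W_0 = 1$.

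\emph{Conclusion.} Ville's inequality with $c = 1/\alpha$ gives
\[
\mathsf{P}\bigl(\exists t : W_t(\mu^\natural) \geq 1/\alpha\bigr) \leq \alpha \, \mathsf{E}[W_0] = \alpha .
\]
On the complementary event, whose probability is at least $1-\alpha$, we have $W_t(\mu^\natural) < 1/\alpha$ for all $t$, that is, $\mu^\natural \in \mathcal{C}_t$ for all $t \in \mathbb{N}$. This is exactly the defining property of a $(1-\alpha)$-confidence sequence.

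The main obstacle is the degenerate case where $\mu^\natural$ is rank-deficient, so that the pseudo-inverse yields only the projector $P$, or where $\xi_t(\mu)=0$. Both are handled by the observation $\tr(P\rho) \leq 1$ and by permitting zero wealth, as described above.
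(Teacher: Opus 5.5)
Your proof is correct and follows the paper's argument: the paper also obtains the theorem by applying Ville's inequality with $c = 1/\alpha$ to the non-negative supermartingale $(W_t(\mu^\natural))_{t \geq 0}$ with $W_0 = 1$. Your additional checks correctly fill in details the paper leaves implicit: that $(\mu^\natural)^{-\dagger/2}\mu^\natural(\mu^\natural)^{-\dagger/2}$ is the projector onto the range of $\mu^\natural$, and how to handle the case $\xi_t(\mu) = 0$.
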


\begin{remark}
Suppose that the observations $\xi_t$ are almost surely diagonal. 
Then $\mu^\natural$ is also diagonal, so we may restrict the candidate means to diagonal density matrices.
In this case, our construction specializes to that of \citet{Ryu2024}. 
\end{remark}

Define the regret for the sequence of observations $( \xi_t ( \mu ) )_{t \in \mathbb{N}}$ as
\[
\regret_t ( \mu ) \coloneqq \sum_{\tau = 1}^t \left( - \log \tr ( \xi_\tau ( \mu ) \rho_\tau ( \mu ) ) \right) - L_t^\star ( \mu ) , \quad L_t^\star ( \mu ) \coloneqq \min_{\rho \in \mathcal{D}_d} \sum_{\tau = 1}^t \left( - \log \tr ( \xi_\tau ( \mu ) \rho ) \right) .
\]
Then, the confidence set $\mathcal{C}_t$ \eqref{eq_confidence_sequence} can be expressed equivalently as
\[
\mathcal{C}_t = \Set{ \mu \in \mathcal{D}_d | \regret_t ( \mu ) > \log \alpha - L_t^\star ( \mu ) } .
\]
The quantity $L_t^\star ( \mu )$ can be efficiently approximated using existing algorithms \citep{Li2019a,Lin2021,Tsai2024}.
Computing $\regret_t ( \mu )$, however, requires implementing an OLQS algorithm.
Given Theorem \ref{thm_main}, Exponential Weights and Q-VB-FTRL are natural choices, but both are computationally expensive.


\begin{corollary} \label{cor_larger_confidence_sequence}
The sequence of sets $( \overline{\mathcal{C}}_t )_{t \in \mathbb{N}}$ given by 
\begin{equation}
\overline{\mathcal{C}}_t \coloneqq \Set{ \mu \in \mathcal{D}_d | \overline{\regret_t ( \mu )} > \log \alpha - L_t^\star ( \mu ) } , \label{eq_confidence_sequence_enlarged}
\end{equation}
where $\overline{\regret_t ( \mu )}$ is any upper bound on $\regret_t ( \mu )$ realized by any OLQS algorithm, is a $(1 - \alpha)$-confidence sequence for $\mu^\natural$. 
\end{corollary}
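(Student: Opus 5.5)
The plan is to show the inclusion $\mathcal{C}_t \subseteq \overline{\mathcal{C}}_t$ for every $t \in \mathbb{N}$, almost surely. Since the previous theorem (via Ville's inequality applied to the supermartingale $(W_t(\mu^\natural))_{t \geq 0}$) shows that $(\mathcal{C}_t)_{t \in \mathbb{N}}$ is a $(1-\alpha)$-confidence sequence, monotonicity of probability then gives
\[
\mathsf{P}\left( \mu^\natural \in \overline{\mathcal{C}}_t \text{ for all } t \in \mathbb{N} \right) \geq \mathsf{P}\left( \mu^\natural \in \mathcal{C}_t \text{ for all } t \in \mathbb{N} \right) \geq 1 - \alpha .
\]
It also has to be checked that $\overline{\mathcal{C}}_t$ depends only on $\xi_1, \ldots, \xi_t$. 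This holds provided the upper bound $\overline{\regret_t(\mu)}$ is computed from the realized regret of the OLQS algorithm run on $\xi_1(\mu), \ldots, \xi_t(\mu)$. The quantity $L_t^\star(\mu)$ is likewise a function of these observations only.

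For the inclusion, fix $t$ and $\mu \in \mathcal{C}_t$. By the equivalent reformulation stated just before the corollary, this means $\regret_t(\mu) > \log \alpha - L_t^\star(\mu)$. That reformulation comes from
\[
\log W_t(\mu) = \sum_{\tau=1}^t \log \tr(\xi_\tau(\mu)\rho_\tau(\mu)) = -\regret_t(\mu) - L_t^\star(\mu) ,
\]
so that $W_t(\mu) < 1/\alpha$ is equivalent to $\regret_t(\mu) > \log\alpha - L_t^\star(\mu)$. Since $\overline{\regret_t(\mu)} \geq \regret_t(\mu)$ by assumption, we get $\overline{\regret_t(\mu)} > \log\alpha - L_t^\star(\mu)$, i.e., $\mu \in \overline{\mathcal{C}}_t$.

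The only delicate point is the degenerate case in which some $\tr(\xi_\tau(\mu)\rho_\tau(\mu)) = 0$ or $L_t^\star(\mu) = +\infty$, so that the logarithms take infinite values. I would handle this by adopting the extended-real conventions $\log 0 = -\infty$ and $-\infty < 1/\alpha$. Under these conventions the reformulation still holds, and the inclusion survives because an upper bound on a quantity equal to $+\infty$ must itself be $+\infty$. The remainder of the argument is direct.
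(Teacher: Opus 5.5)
Your proof is correct and follows the paper's own (implicit) argument: the identity $\log W_t(\mu) = -\regret_t(\mu) - L_t^\star(\mu)$ combined with $\overline{\regret_t(\mu)} \geq \regret_t(\mu)$ gives $\mathcal{C}_t \subseteq \overline{\mathcal{C}}_t$, and coverage is inherited from the Ville-based confidence sequence $(\mathcal{C}_t)$. Your remarks on adaptedness and on the extended-real degenerate cases are fine; for coverage, only the inclusion at $\mu = \mu^\natural$ is needed, where $L_t^\star(\mu^\natural)$ is almost surely finite.
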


The construction does not require implementing any OLQS algorithm but requires a tight regret bound. 
Exploiting exp-concavity of logarithmic losses, the result of \citet[Theorem 7]{Hazan2007} implies that the regret of Exponential Weights is no larger than $( d^2 - 1 ) ( 1 + \log ( T + 1 ) )$ in the OLQS game. 
Below, we provide a sharper regret bound whose proof, deferred to Appendix \ref{app_blum_kalai}, essentially follows the strategy of \citet[Theorem 1]{Blum1999}.

\begin{theorem} \label{thm_blum_kalai}
Let $\mathcal{X}$ be a bounded closed convex set with non-empty relative interior in a finite-dimensional Euclidean space. 
Let $p$ be the dimension of its affine hull. 
Consider an online convex optimization problem of $T$ rounds, where the action set is $\mathcal{X}$ and the loss functions are $\eta$-exp-concave for some $\eta > 0$. 
Then, Exponential Weights with learning rate $\eta$ and the uniform prior achieves a regret no greater than $(1 / \eta) \log \binom{T + p}{p}$. 
\end{theorem}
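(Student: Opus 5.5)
The proof follows the Blum--Kalai argument, which rests on exp-concavity and a volume computation. Exponential Weights with learning rate $\eta$ and the uniform prior plays
\[
x_t = \frac{\int_{\mathcal{X}} x \, \eu^{-\eta \sum_{s<t} f_s(x)} \, \du x}{\int_{\mathcal{X}} \eu^{-\eta \sum_{s<t} f_s(x)} \, \du x},
\]
where $\du x$ is the $p$-dimensional Lebesgue measure on the affine hull of $\mathcal{X}$. Since the relative interior is non-empty, $\mathcal{X}$ has positive finite $p$-dimensional volume.

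\textbf{Step 1: telescoping.} Set $Z_t = \int_{\mathcal{X}} \eu^{-\eta \sum_{s\le t} f_s(x)}\,\du x$, with $Z_0 = \operatorname{vol}(\mathcal{X})$. Because $\eu^{-\eta f_t}$ is concave, Jensen's inequality with the posterior as the averaging measure gives
\[
\eu^{-\eta f_t(x_t)} \geq \frac{Z_t}{Z_{t-1}}.
\]
Multiplying over $t$ yields
\[
\sum_{t=1}^T f_t(x_t) \leq -\frac{1}{\eta} \log \frac{Z_T}{Z_0}.
\]

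\textbf{Step 2: lower bound on $Z_T$.} Let $x^\star$ be a minimizer of $\sum_t f_t$ over $\mathcal{X}$, which exists by compactness if the $f_t$ are lower semicontinuous; otherwise take a near-minimizer and pass to a limit. For $\lambda \in \interval{0}{1}$, consider the shrunk set $S_\lambda = \{ (1-\lambda)x^\star + \lambda y : y \in \mathcal{X}\}$. It lies in $\mathcal{X}$ by convexity and has volume $\lambda^p \operatorname{vol}(\mathcal{X})$. Concavity of each $g_t = \eu^{-\eta f_t}$, together with $g_t \geq 0$, gives
\[
g_t\bigl((1-\lambda)x^\star+\lambda y\bigr) \geq (1-\lambda) g_t(x^\star).
\]
Integrating over $S_\lambda$ only gives $(1-\lambda)^T \lambda^p$, whose optimum behaves like $\binom{T+p}{p}^{-1}$ only up to constants. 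To get the exact binomial, integrate over all $\lambda$ via the change of variables
\[
x = (1-\lambda)x^\star + \lambda y, \qquad y \in \partial\mathcal{X},
\]
that is, polar coordinates centered at $x^\star$. The Jacobian is proportional to $\lambda^{p-1}$, so
\[
\frac{Z_T}{Z_0} \geq \frac{\int_0^1 \lambda^{p-1}(1-\lambda)^T \,\du\lambda}{\int_0^1 \lambda^{p-1}\,\du\lambda} \, \eu^{-\eta \sum_t f_t(x^\star)} = p\,B(p, T+1)\, \eu^{-\eta \sum_t f_t(x^\star)} = \binom{T+p}{p}^{-1} \eu^{-\eta \sum_t f_t(x^\star)}.
\]

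\textbf{Step 3: conclusion.} Combining Steps 1 and 2 gives the regret bound $(1/\eta)\log\binom{T+p}{p}$.

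The main obstacle is the change of variables in Step 2. One must state the radial decomposition of a convex body around $x^\star$ cleanly even when $x^\star$ lies on the relative boundary. In that case some rays are degenerate, but they form a null set, and the map $(\lambda, y)\mapsto(1-\lambda)x^\star+\lambda y$ still covers $\mathcal{X}$ up to measure zero.

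A cleaner route avoids the boundary parametrization. The key observation is that the sets $\{(1-\lambda)x^\star+\lambda y : y\in\mathcal{X}\}$ are nested with volume $\lambda^p \operatorname{vol}(\mathcal{X})$. Define the gauge
\[
\lambda(x) = \inf\{\lambda : x \in S_\lambda\},
\]
so that on $\mathcal{X}$ we have $\prod_t g_t(x) \ge (1-\lambda(x))^T \prod_t g_t(x^\star)$. Its distribution under the uniform measure on $\mathcal{X}$ satisfies $\mathsf{P}(\lambda(x)\le s)=s^p$. Hence
\[
\mathsf{E}\bigl[(1-\lambda(x))^T\bigr] = \int_0^1 (1-s)^T\, p s^{p-1}\,\du s = \binom{T+p}{p}^{-1},
\]
which is exactly the Beta-integral identity above, obtained by the layer-cake formula.
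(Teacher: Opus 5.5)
Your proof is correct and follows essentially the same route as the paper. Both use the same Jensen-based telescoping, the same sets shrunk toward the comparator with volume scaling as a $p$-th power, and a layer-cake computation giving the Beta integral $1/\binom{T+p}{p}$. The only differences are cosmetic: you apply the layer-cake formula to the gauge $\lambda(x)$, while the paper applies it to the ratio $\eu^{-\eta f_{1:T}(U)}/\eu^{-\eta f_{1:T}(x_\star)}$ with tail bound $(1-t^{1/T})^p$, which is the same integral after substituting $t=(1-s)^T$. The polar-coordinate detour and the existence of a minimizer are unnecessary, since the argument works for any comparator $x^\star\in\mathcal{X}$ and one takes the infimum at the end.
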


Combining Corollary \ref{cor_larger_confidence_sequence} and Theorem \ref{thm_blum_kalai}, we obtain the following explicit constructions of confidence sequences for both the complex and real cases. 
The proof is deferred to Appendix \ref{app_explicit_confidence_sequences}. 

\begin{corollary} \label{cor_explicit_confidence_sequences}
The sequence of sets $( \overline{\mathcal{C}_t} )_{t \in \mathbb{N}}$ defined in Corollary \ref{cor_larger_confidence_sequence} with $\overline{\regret_t ( \mu )} = \log \binom{t + p}{p}$, where $p = d^2 - 1$, is a $(1 - \alpha)$-confidence sequence. 

If it is known that the random matrices $\xi_t$ are almost surely real, then we can set $p = d^2 / 2 + d / 2 - 1$ and replace $\mathcal{D}_d$ with $\mathcal{D}_d^{\mathbb{R}}$ in \eqref{eq_confidence_sequence_enlarged} and the definition of $L_t^\star ( \mu )$. 
\end{corollary}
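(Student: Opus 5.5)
The proof combines Corollary \ref{cor_larger_confidence_sequence} with Theorem \ref{thm_blum_kalai}. The only real work is identifying the action set, its affine dimension $p$, and the exp-concavity parameter for the OLQS game that is played for each candidate $\mu$.

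\textbf{Complex case.} Fix $\mu \in \mathcal{D}_d$. Consider the OLQS game with $A_\tau = \xi_\tau(\mu)$, and let Physicist play Exponential Weights with learning rate $\eta = 1$ and the uniform prior on $\mathcal{D}_d$.

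First, I check the hypotheses of Theorem \ref{thm_blum_kalai}.
\begin{itemize}
\item $\mathcal{D}_d$ is a bounded closed convex subset of the real Euclidean space of $d \times d$ Hermitian matrices, which has dimension $d^2$.
\item Its affine hull is the hyperplane $\{\rho : \tr \rho = 1\}$, of dimension $d^2 - 1$.
\item Its relative interior in that hull is nonempty, since it contains the maximally mixed state $I_d/d$ together with a small neighbourhood of positive definite unit-trace matrices.
\item Each loss $\rho \mapsto -\log\tr(\xi_\tau(\mu)\rho)$ is $1$-exp-concave, because $\eu^{-f_\tau(\rho)} = \tr(\xi_\tau(\mu)\rho)$ is linear in $\rho$.
\end{itemize}

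One subtlety: the eigenvalues of $\xi_\tau(\mu)$ need not lie in $[0,1]$, and the matrix may even vanish. Rescaling $A_\tau$ does not change the regret, so the first issue is harmless. If $\tr(\xi_\tau(\mu)\rho) = 0$ for all $\rho$, then $W_t(\mu) = 0$ and $L_t^\star(\mu) = +\infty$. In that case $\mu$ is excluded from both $\mathcal{C}_t$ and $\overline{\mathcal{C}}_t$, so nothing needs to be proved. More generally, Theorem \ref{thm_blum_kalai} applies to extended-valued exp-concave losses, or one argues directly that $W_t(\mu) \geq \eu^{-L_t^\star(\mu)}/\binom{t+p}{p}$ through the mixture representation.

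With $p = d^2 - 1$, Theorem \ref{thm_blum_kalai} gives $\regret_t(\mu) \leq \log\binom{t+p}{p}$ for every $t$. This bound holds uniformly in $t$, because Exponential Weights with the uniform prior does not depend on the horizon. Hence it is a valid $\overline{\regret_t(\mu)}$ in the sense of Corollary \ref{cor_larger_confidence_sequence}, and that corollary yields the first claim.

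\textbf{Real case.} Suppose $\xi_t$ is almost surely real symmetric. Then $\mu^\natural$ is real, and so is $\xi_\tau(\mu) = \mu^{-\dagger/2}\xi_\tau\mu^{-\dagger/2}$ for every $\mu \in \mathcal{D}_d^{\mathbb{R}}$, since the pseudo-inverse square root of a real matrix is real.

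I would rerun the supermartingale and Ville argument with Physicist's action set $\mathcal{D}_d^{\mathbb{R}}$. The key step carries over unchanged: $\tr\bigl((\mu^\natural)^{-\dagger/2}\mu^\natural(\mu^\natural)^{-\dagger/2}\rho\bigr) \leq 1$ holds because the matrix inside is a projection. This gives the analogue of Corollary \ref{cor_larger_confidence_sequence} with $L_t^\star$ minimized over $\mathcal{D}_d^{\mathbb{R}}$ and candidates $\mu$ ranging over $\mathcal{D}_d^{\mathbb{R}}$.

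Then I apply Theorem \ref{thm_blum_kalai} to $\mathcal{X} = \mathcal{D}_d^{\mathbb{R}}$. It lives in the space of real symmetric matrices, of dimension $d(d+1)/2$, and the trace constraint reduces this to $p = d^2/2 + d/2 - 1$. The nonempty relative interior again comes from $I_d/d$.

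The main obstacle is bookkeeping rather than depth. One must verify that the real-case supermartingale construction still works with the restricted action set and restricted candidate means, and handle the degenerate and unbounded-eigenvalue situations noted above. I expect both to be routine.
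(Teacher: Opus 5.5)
Your proof is correct. The complex case is exactly the paper's argument: verify the hypotheses of Theorem~\ref{thm_blum_kalai} on $\mathcal{D}_d$ with $\eta = 1$ and $p = d^2-1$, then plug the bound into Corollary~\ref{cor_larger_confidence_sequence}. Your remark that Exponential Weights with the uniform prior does not depend on the horizon, so the bound holds for every $t$ at once, makes explicit something the paper leaves implicit.

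\textbf{Real case: a different route.} You re-run the supermartingale and Ville argument with Physicist restricted to $\mathcal{D}_d^{\mathbb{R}}$ and candidates in $\mathcal{D}_d^{\mathbb{R}}$. The comparator over $\mathcal{D}_d^{\mathbb{R}}$ then enters by definition, and Theorem~\ref{thm_blum_kalai} applies directly.

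The paper instead stays inside the original complex construction, using two observations:
\begin{itemize}
\item Intersecting the candidate set with $\mathcal{D}_d^{\mathbb{R}}$ does not change the coverage event, because $\mu^\natural$ is real.
\item For real data, a complex minimizer $\rho^\star$ can be replaced by $(\overline{\rho^\star}+\rho^\star)/2$ without changing $\tr(M\rho^\star)$. Hence $L_t^\star(\mu)$ over $\mathcal{D}_d$ equals $L_t^\star(\mu)$ over $\mathcal{D}_d^{\mathbb{R}}$.
\end{itemize}
Consequently, Exponential Weights over $\mathcal{D}_d^{\mathbb{R}}$, viewed as an ordinary complex OLQS algorithm, has complex regret bounded with the smaller $p$. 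Corollary~\ref{cor_larger_confidence_sequence} then applies verbatim.

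\textbf{What each route buys.} Your route is more self-contained and avoids the conjugate-averaging step. It is also more general: it works for any convex action set inside $\mathcal{D}_d$, since the supermartingale step only uses that $(\mu^\natural)^{-\dagger/2}\mu^\natural(\mu^\natural)^{-\dagger/2}$ is a projection. The paper's route shows that the smaller $p$ costs nothing in the comparator. The real-case set uses the same $L_t^\star(\mu)$ as the complex one with a smaller regret term, so it is contained in $\overline{\mathcal{C}}_t \cap \mathcal{D}_d^{\mathbb{R}}$. Your argument alone leaves open whether restricting the action set weakens the comparator.

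\textbf{A small correction on the degenerate case.} If $\xi_\tau(\mu) = 0$ for some $\tau \le t$, then $W_t(\mu) = 0 < 1/\alpha$ and $\log\alpha - L_t^\star(\mu) = -\infty$. So such $\mu$ are \emph{included} in both $\mathcal{C}_t$ and $\overline{\mathcal{C}}_t$, not excluded. Your conclusion that nothing needs proving still stands, because the inclusion $\mathcal{C}_t \subseteq \overline{\mathcal{C}}_t$ holds trivially for these $\mu$.
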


\begin{remark}
Our construction is not limited to the case where $\xi_t$ are random density matrices. 
More generally, suppose that $\xi_t$ are Hermitian matrices satisfying $A \leq \xi_t \leq B$ for known Hermitian matrices $A$ and $B$. 
If $A \neq B$, define $c \coloneqq \tr ( B - A )$ and the affine transformation
\[
\xi_t \mapsto \widetilde{\xi}_t
\coloneqq
\begin{pmatrix}
\frac{\xi_t - A}{c} & 0 \\
0 & 1 - \tr \left( \frac{\xi_t - A}{c} \right)
\end{pmatrix} \in \mathcal{D}_{d + 1}. 
\]
A confidence sequence for $\mathsf{E} \left[ \widetilde{\xi}_t \right]$ then immediately yields one for $\mathsf{E} \left[ \xi_t \right]$ through the inverse affine transformation.
\end{remark}

\section{Numerical Experiment} \label{sec_numerical}

We conduct a numerical experiment to compare our confidence sequence construction with existing methods for covariance matrix estimation.
Let $(v_t)_{t\in\mathbb{N}}$ be i.i.d. mean-zero random vectors in $\mathbb{R}^d$ with covariance matrix $\Sigma$.
Our goal is to construct a confidence sequence for $\Sigma$.
This problem fits our framework by setting $\xi_t=v_t v_t^\transp$ and $\mu^\natural = \Sigma$.
Figure \ref{fig_anisotropic} compares our construction with those of \citet[Eqs. (15)--(16)]{Ryu2024}, implemented using matrix vectorization and the regret bound of \citet{Cover1996}, \citet[Eq. (4.10)]{Howard2021}, and \citet[Theorem 4.2]{Wang2025}.
The latter two methods are based on matrix concentration inequalities rather than online-learning regret bounds.

We set $d=8$ and consider $T=200$ observations, with confidence level $1-\alpha=0.95$.
The random vectors $( v_t )_{t \in \mathbb{N}}$ are generated such that $\norm{ v_t }_2 = 1$, the covariance matrix $\Sigma$ lies in $\mathcal{D}_d^{\mathbb{R}}$, and the support of the distribution of $\xi_t = v_t v_t^\transp$ contains non-commuting matrices.

The performance is measured by the \emph{relative Hilbert--Schmidt volume}, defined as the ratio between the Hilbert--Schmidt volumes of a confidence set and $\mathcal{D}_d^{\mathbb{R}}$ \citep{Zyczkowski2003}.
For each method, we intersect its confidence sets obtained up to time $t$ and restrict the resulting set to $\mathcal D_d^{\mathbb R}$, to ensure that the resulting sets are nested over time.
We approximate these volumes by Monte Carlo sampling and average the estimates over 1000 independent repetitions of the experiment.
The error bars in Figure \ref{fig_anisotropic} represent pointwise 95\% Student-$t$ confidence intervals for the mean relative volume.

\begin{figure}[t]
    \centering
    \includegraphics[width=\textwidth]{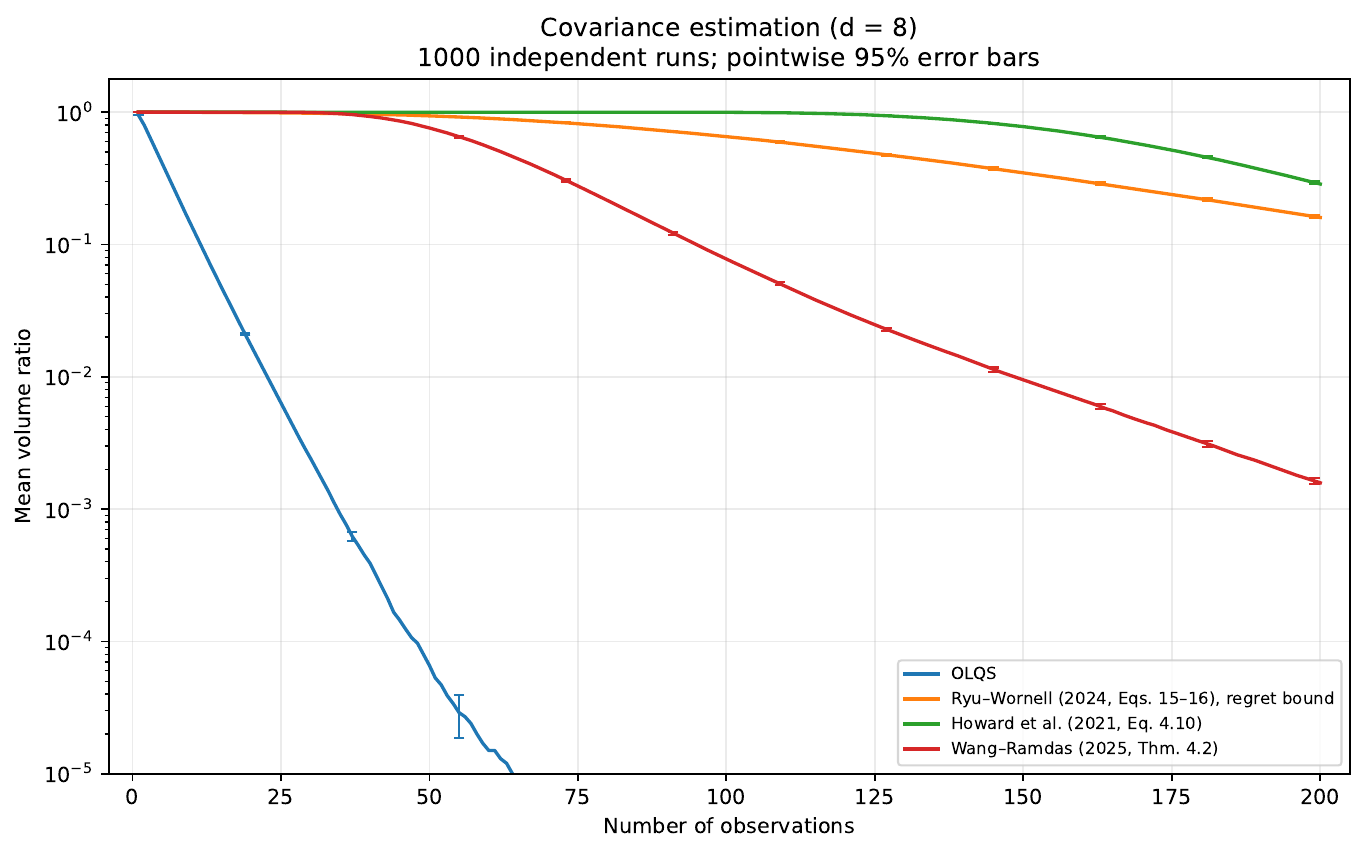}
    \caption{Comparison of the mean relative Hilbert--Schmidt volumes of confidence sequences for covariance matrix estimation.
    OLQS denotes our construction following the second claim of Corollary \ref{cor_explicit_confidence_sequences}.
    See Section \ref{sec_numerical} and Appendix \ref{app_numerical} for details.}
    \label{fig_anisotropic}
\end{figure}

Figure \ref{fig_anisotropic} shows that our construction yields substantially smaller estimated relative volumes than the three competing implementations in this experiment.
In particular, the construction based on matrix vectorization and the regret bound of \citet{Cover1996} produces considerably larger estimated confidence-set volumes, suggesting that this adaptation of \citet{Ryu2024} is less effective in the considered setting.
For \citet{Howard2021} and \citet{Wang2025}, we follow the parameter choices specified in their papers without further tuning.
In comparison, our construction requires no tuning parameter beyond the confidence level. 

Further details of the numerical experiment can be found in Appendix \ref{app_numerical}. 

\section{Concluding Remarks}

We have shown that no algorithm for OLQS can achieve a regret bound of $O ( d^\gamma \log T )$ for any $\gamma < 2$, establishing the near-optimality of Exponential Weights and the Q-VB-FTRL algorithm of \citet{Tseng2025} in terms of the polynomial dependence of their regret rates on $d$.
Our proof strategy also yields a novel online learning-based quantum state tomography algorithm that is sample-optimal up to polylogarithmic factors in
the sample size. 
Furthermore, viewing OLQS as a quantum generalization of online portfolio selection, we have extended the connection between online portfolio selection and confidence sequence construction to matrix-valued parameters.
The regret bound for Exponential Weights yields confidence sequences whose candidate membership can be assessed by convex optimization without implementing an OLQS algorithm.
Figure \ref{fig_anisotropic} illustrates the empirical competitiveness of our construction in covariance matrix estimation.

Several research problems follow.
First, our result characterizes only the optimal polynomial dependence of regret rate on the dimension and does not determine the minimax-optimal regret rate, as discussed in Remark \ref{rem_optimal_regret}.
Deriving the minimax-optimal regret rate is a natural direction for future work.
In particular, deriving sharper regret upper bounds, ideally with optimal constants, could lead to tighter confidence sequences via Corollary \ref{cor_larger_confidence_sequence}.
Second, although our confidence sequence appears competitive empirically, we have not established any convergence rate in terms of the sizes of its confidence sets, nor have we proved its optimality under any criterion.
Addressing these questions would help clarify the theoretical advantages of online learning-based confidence sequence construction.
Finally, the online learning-based approach may extend to more general parameter spaces by considering losses of the form $-\log L ( x )$, where $L$ is a linear functional positive on the parameter space.
Developing a unified theory for such constructions would be an interesting direction for future work.

\section*{AI Disclosure}
The strategies for deriving the regret lower bounds and constructing the confidence sequences were developed by the authors.
Nevertheless, the original online-to-batch scheme adopted by the authors was that of \citet{Foster2018}; GPT-6 Pro noted that the scheme of \citet{Mehta2017} suffices.
The regret bound originally derived by the authors for Theorem \ref{thm_blum_kalai} was $( d / \eta ) \log ( T + 1 ) + ( 1 / \eta )$, looser than the current bound. 
GPT-6 Pro noted that the analysis of \citet{Blum1999} can be extended to exp-concave online convex optimization, yielding an improved regret bound.
GPT-6 Pro generated the Julia code for the numerical experiments, while the experimental settings and parameter choices were determined by the authors.
Appendix \ref{app_numerical} was initially generated by GPT-6 after multiple rounds of conversations and then modified by the authors. 
The authors also used GPT-6 to correct grammatical errors in the draft.

\acks{This work was supported by the National Science and Technology Council of Taiwan under Grant Nos. NSTC 112-2628-E-002-019-MY3 and NSTC 115-2221-E-002-208, by the research project “Geometry of Quantum Learning and Optimization” at National Taiwan University under Grant No. NTU-CC-115L893706, and by the Young Theoretical Scientist Award of the Mathematics Division of the National Center for Theoretical Sciences of Taiwan.}

\bibliography{list_olqs_qst_cs}

\appendix

\section{Proof of Corollary \ref{cor_main_real}} \label{app_main_real}

We prove the corollary by showing that any algorithm for the real version of the OLQS game in dimension $2d$ can be transformed into an algorithm for the original OLQS game in dimension $d$, and that both algorithms incur the same regret values on the corresponding data sequences. 
Then, any regret lower bound of the original OLQS game also implies a regret lower bound of its real version.

Let $d \in \mathbb{N}$. 
Denote by $\mathcal{D}_{2d}^{\mathbb{R}}$ the set of $(2d) \times (2d)$ real symmetric positive semidefinite matrices with unit trace. 
For any matrix $M \in \mathbb{C}^{d \times d}$, define the realification map as
\[
\Phi ( M ) \coloneqq 
\begin{pmatrix}
\Re M & - \Im M \\
\Im M & \Re M 
\end{pmatrix} , 
\]
where $\Re M$ and $\Im M$ denote the real and imaginary parts of the matrix $M$, respectively. 

\begin{lemma}[{\citet[Section 3]{Goemans2004}}] \label{lem_goemans}
A Hermitian complex matrix $M$ is positive semidefinite if and only if $\Phi ( M )$ is positive semidefinite. 
\end{lemma}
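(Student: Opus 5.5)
The plan is to prove both directions through the standard correspondence between complex vectors in $\mathbb{C}^d$ and real vectors in $\mathbb{R}^{2d}$. For $z = x + \mathrm{i} y$ with $x, y \in \mathbb{R}^d$, set $\hat z \coloneqq (x^\transp, y^\transp)^\transp \in \mathbb{R}^{2d}$. Also define the companion vector $\check z \coloneqq (-y^\transp, x^\transp)^\transp$, which is $\widehat{\mathrm{i} z}$. Write $M = P + \mathrm{i} Q$ with $P = \Re M$ and $Q = \Im M$.

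\textbf{Quadratic forms.} Since $M$ is Hermitian, $P$ is real symmetric and $Q$ is real antisymmetric. Hence $\Phi(M)$ is real symmetric, so positive semidefiniteness of $\Phi(M)$ means $w^\transp \Phi(M) w \geq 0$ for all $w \in \mathbb{R}^{2d}$. The first key step is the identity
\[
\hat z^\transp \Phi(M) \hat z = z^\ast M z \quad \text{for all } z \in \mathbb{C}^d .
\]
To verify it, I expand $z^\ast M z = (x - \mathrm{i} y)^\transp (P + \mathrm{i} Q)(x + \mathrm{i} y)$. Its real part is $x^\transp P x + y^\transp P y - x^\transp Q y + y^\transp Q x$. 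The imaginary part vanishes: $x^\transp P y - y^\transp P x = 0$ by symmetry of $P$, and $x^\transp Q x = y^\transp Q y = 0$ by antisymmetry of $Q$. Expanding the block product $\hat z^\transp \Phi(M) \hat z$ gives $x^\transp P x - x^\transp Q y + y^\transp Q x + y^\transp P y$, which is the same expression.

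\textbf{Both directions.} Every $w \in \mathbb{R}^{2d}$ equals $\hat z$ for a unique $z \in \mathbb{C}^d$, namely $z = w_{1:d} + \mathrm{i}\, w_{d+1:2d}$. The map $z \mapsto \hat z$ is therefore a bijection $\mathbb{C}^d \to \mathbb{R}^{2d}$. The identity then says that the set of values of the quadratic form of $\Phi(M)$ on $\mathbb{R}^{2d}$ coincides with the set of values of $z^\ast M z$ on $\mathbb{C}^d$. One set is contained in $[0, \infty)$ exactly when the other is, which proves both directions at once.

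\textbf{Spectral alternative.} One could instead show that $\Phi(M)$ has the same eigenvalues as $M$, each with doubled multiplicity. The reason is that if $M z = \lambda z$, then $\Phi(M) \hat z = \lambda \hat z$ and $\Phi(M) \check z = \lambda \check z$, using $\Phi(M) \hat z = \widehat{M z}$ and real $\lambda$. This route is not needed for the lemma.

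\textbf{Main obstacle.} The only real care needed is the sign bookkeeping in the identity, in particular placing $-\Im M$ in the upper-right block. I would double-check it on $d = 1$: with $M = m$ real, both sides equal $m(x^2 + y^2)$, since $Q = 0$ and the cross terms involving $Q$ cancel.
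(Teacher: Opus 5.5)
Your proof is correct and complete. The identity $\hat z^\transp \Phi(M)\hat z = z^\ast M z$ checks out: the imaginary part vanishes by symmetry of $\Re M$ and antisymmetry of $\Im M$. Together with the bijection $z\mapsto\hat z$ from $\mathbb{C}^d$ onto $\mathbb{R}^{2d}$, it gives both implications at once.

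The paper gives no proof of this lemma; it only cites Goemans and Williamson, so there is no argument of its own to compare yours against. Your argument is the standard one. It is also the same vector-embedding device the paper uses to prove Lemma \ref{lem_remain_density}, namely $v^\ast\Psi(\rho^{\mathbb{R}})v = \tilde v^\ast\rho^{\mathbb{R}}\tilde v$ with $\tilde v = (v^\transp, -i v^\transp)^\transp$.

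One small remark on your ``main obstacle'': the $d=1$ check cannot detect a sign error, because a $1\times 1$ Hermitian matrix has $\Im M = 0$. In any case, the sign placement of the off-diagonal blocks does not matter for the lemma. Flipping it produces $\Phi(\overline{M})$, and $\overline{M}$ is positive semidefinite if and only if $M$ is, since $\overline{z}^\ast\,\overline{M}\,\overline{z} = \overline{z^\ast M z} = z^\ast M z$.
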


For any matrix
\[
\rho^{\mathbb{R}} = 
\begin{pmatrix}
P & Q \\
Q^\transp & R 
\end{pmatrix} 
\in \mathcal{D}_{2d}^{\mathbb{R}} , 
\]
where $P$ and $R$ are symmetric matrices in $\mathbb{R}^{d \times d}$, define its ``complexification'' as 
\[
\Psi ( \rho^{\mathbb{R}} ) \coloneqq (P + R) - i \left( Q - Q^\transp \right) , 
\]
where $i$ denotes $\sqrt{-1}$. 

\begin{lemma} \label{lem_remain_density}
The matrix $\Psi ( \rho^{\mathbb{R}} )$ is a density matrix. 
\end{lemma}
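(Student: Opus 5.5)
We must show $\Psi(\rho^{\mathbb{R}})$ is Hermitian, has unit trace, and is positive semidefinite. The first two are direct computations; positivity carries the content.

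\emph{Hermiticity and trace.} Since $P$ and $R$ are real symmetric, $P+R$ is real symmetric. The matrix $Q - Q^\transp$ is real antisymmetric, so $i(Q-Q^\transp)$ is Hermitian: its conjugate transpose is $-i(Q-Q^\transp)^\transp = i(Q - Q^\transp)$. Hence $\Psi(\rho^{\mathbb{R}})$ is Hermitian. Its trace is $\tr P + \tr R - i\,\tr(Q - Q^\transp) = \tr \rho^{\mathbb{R}} = 1$, because an antisymmetric matrix has zero trace.

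\emph{Positivity.} I would test against an arbitrary $z = x + i y \in \mathbb{C}^d$ with $x, y \in \mathbb{R}^d$ and express $z^\ast \Psi(\rho^{\mathbb{R}}) z$ as a sum of real quadratic forms of $\rho^{\mathbb{R}}$. Write $S = P + R$ and $K = Q - Q^\transp$. Since $S$ is real symmetric, $z^\ast S z = x^\transp S x + y^\transp S y$. Since $K$ is antisymmetric, $z^\ast K z = 2i\, x^\transp K y$, so $z^\ast(-iK)z = 2 x^\transp K y$. Thus
\[
z^\ast \Psi(\rho^{\mathbb{R}}) z = x^\transp (P+R) x + y^\transp (P+R) y + 2 x^\transp (Q - Q^\transp) y .
\]
Now consider the real vectors $u = (x; y)$ and $v = (-y; x)$ in $\mathbb{R}^{2d}$. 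A direct expansion gives
\[
u^\transp \rho^{\mathbb{R}} u = x^\transp P x + 2 x^\transp Q y + y^\transp R y , \qquad v^\transp \rho^{\mathbb{R}} v = y^\transp P y - 2 y^\transp Q x + x^\transp R x .
\]
Their sum equals the expression above, using $y^\transp Q x = x^\transp Q^\transp y$. Since $\rho^{\mathbb{R}} \succeq 0$, both terms are nonnegative, so $z^\ast \Psi(\rho^{\mathbb{R}}) z \geq 0$.

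Equivalently, $\Psi$ is the adjoint of $\Phi$ with respect to the trace inner product, up to the sign convention for $\Im$. One could then argue via Lemma \ref{lem_goemans}: for $z z^\ast$ we have $\tr(\Psi(\rho^{\mathbb{R}}) z z^\ast) = \tr(\rho^{\mathbb{R}} \Phi(z z^\ast)) \geq 0$, because $\Phi(zz^\ast) \succeq 0$.

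The main obstacle is the sign bookkeeping in the cross term: the sign of $i(Q - Q^\transp)$ must match the pairing of $u$ with $v$, rather than its conjugate. If the signs come out reversed, I would switch to $v = (y; -x)$ or use $\bar z$ in place of $z$. Either way the sum remains a sum of two nonnegative quadratic forms, so the conclusion is unaffected.
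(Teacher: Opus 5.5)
Your proof is correct and is essentially the paper's argument: the paper tests against $\tilde v = (v; -iv)$ and shows $v^\ast \Psi(\rho^{\mathbb{R}}) v = \tilde v^\ast \rho^{\mathbb{R}} \tilde v \geq 0$, and for $v = x + iy$ the real and imaginary parts of this $\tilde v$ are your vectors $(x; y)$ and $-(-y; x)$, so your identity is that computation written in real coordinates. The cross-term signs are correct as you wrote them, so you do not need the fallback of switching the second vector or conjugating $z$.
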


\begin{proof}
It is easily checked that the matrix $\Psi ( \rho^{\mathbb{R}} )$ is Hermitian with unit trace. 
Note that for every $v \in \mathbb{C}^d$, 
\[
v^\ast \Psi ( \rho^{\mathbb{R}} ) v = v^\ast ( P + R ) v - i v^\ast ( Q - Q^\transp ) v = \tilde{v}^\ast \rho^{\mathbb{R}} \tilde{v} , 
\]
where 
\[
\tilde{v} \coloneqq 
\begin{pmatrix}
v \\
- i v 
\end{pmatrix} . 
\]
Since \(\rho^{\mathbb R}\) is positive semidefinite, $\tilde{v}^\ast \rho^{\mathbb{R}} \tilde{v} \geq 0$.
Hence, the matrix $\Psi ( \rho^{\mathbb{R}} )$ is also positive semidefinite. 
\end{proof}

\begin{lemma} \label{lem_equal_trace}
For any Hermitian matrix $M \in \mathbb{C}^{d \times d}$ and $\rho^{\mathbb{R}} \in \mathcal{D}^{\mathbb{R}}_{2d}$, it holds that 
\[
\tr ( M \Psi ( \rho^{\mathbb{R}} ) ) = \tr ( \Phi ( M ) \rho^{\mathbb{R}}  ) . 
\]
\end{lemma}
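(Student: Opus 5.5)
The plan is a direct computation: write $M = X + iY$ with $X = \Re M$ real symmetric and $Y = \Im M$ real antisymmetric. This decomposition holds because $M$ is Hermitian, so $M^\ast = M$ forces $X^\transp = X$ and $Y^\transp = -Y$. We then expand both sides of the claimed identity using the block structure of $\rho^{\mathbb{R}}$ and compare the results term by term.

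For the right-hand side, multiplying the block matrices gives
\[
\tr ( \Phi ( M ) \rho^{\mathbb{R}} ) = \tr ( X P ) - \tr ( Y Q^\transp ) + \tr ( Y Q ) + \tr ( X R ) .
\]
Here the first two terms come from the $(1,1)$ block, $X P - Y Q^\transp$, and the last two from the $(2,2)$ block, $Y Q + X R$. Using $\tr ( Y Q^\transp ) = \tr ( Q Y^\transp ) = - \tr ( Q Y ) = -\tr ( Y Q )$, this simplifies to
\[
\tr ( X ( P + R ) ) + 2 \tr ( Y Q ) .
\]

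For the left-hand side, we compute
\[
\tr \bigl( ( X + i Y ) ( ( P + R ) - i ( Q - Q^\transp ) ) \bigr) = \tr ( X ( P + R ) ) + \tr ( Y ( Q - Q^\transp ) ) + i \bigl[ \tr ( Y ( P + R ) ) - \tr ( X ( Q - Q^\transp ) ) \bigr] .
\]
The imaginary bracket vanishes for two reasons. First, the trace of a product of an antisymmetric matrix $Y$ and a symmetric matrix $P + R$ is zero. Second, the trace of a product of a symmetric matrix $X$ and an antisymmetric matrix $Q - Q^\transp$ is also zero. Both facts follow from $\tr ( S A ) = \tr ( ( S A )^\transp ) = \tr ( A^\transp S^\transp ) = - \tr ( S A )$ for $S$ symmetric and $A$ antisymmetric. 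The remaining real term satisfies $\tr ( Y ( Q - Q^\transp ) ) = \tr ( Y Q ) - \tr ( Y Q^\transp ) = 2 \tr ( Y Q )$, by the identity used in the previous paragraph. The two sides therefore agree.

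The only real obstacle is sign bookkeeping, in particular the sign convention for $\Im$ in the off-diagonal blocks of $\Phi$ relative to the $-i$ in $\Psi$. I would double-check this on the scalar case $d = 1$ before writing out the final computation. A secondary consistency check is that the identity implies $\tr ( \Phi ( M ) \rho^{\mathbb{R}} )$ is real, as it must be.
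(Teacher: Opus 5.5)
Your proof is correct and is essentially the paper's: expand both sides with $M = \Re M + i \Im M$ and compare real parts. The paper kills the imaginary part by noting that $M$ and $\Psi(\rho^{\mathbb{R}})$ are both Hermitian, so the trace of their product is real, whereas you use the symmetric/antisymmetric trace orthogonality, and you also write out the ``direct calculation'' the paper leaves implicit (your proposed $d=1$ sign check would be vacuous since $\Im M = 0$ and $Q - Q^\transp = 0$ there, but your explicit block computation already settles the signs).
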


\begin{proof}
Use the notations above. 
We write 
\begin{align*}
\tr ( M \Psi ( \rho^{\mathbb{R}} ) ) & = \tr \left( ( \Re M ) ( P + R ) + ( \Im M ) ( Q - Q^\transp ) \right) + \\
& \quad \, \, i \tr \left( ( \Im M ) ( P + R ) - ( \Re M ) ( Q - Q^\transp ) \right) \\
& = \tr \left( ( \Re M ) ( P + R ) +  ( \Im M ) ( Q - Q^\transp ) \right) \\
& = \tr ( \Phi ( M ) \rho^{\mathbb{R}}  ) . 
\end{align*}
The second equality follows from the fact that both $M$ and $\Psi ( \rho^{\mathbb{R}} )$ are Hermitian, so the imaginary part of the trace must vanish; 
the third equality follows from direct calculation. 
\end{proof}

\begin{lemma} \label{lem_onto}
The mapping $\Psi$ is onto. 
\end{lemma}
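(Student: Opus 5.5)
The plan is to build an explicit right inverse of $\Psi$ on $\mathcal{D}_d$. Given an arbitrary density matrix $\rho \in \mathcal{D}_d$, I will find $\rho^{\mathbb{R}} \in \mathcal{D}_{2d}^{\mathbb{R}}$ with $\Psi(\rho^{\mathbb{R}}) = \rho$. The natural candidate is a rescaled realification,
\[
\rho^{\mathbb{R}} \coloneqq \frac{1}{2} \Phi(\rho) = \frac{1}{2}
\begin{pmatrix}
\Re \rho & - \Im \rho \\
\Im \rho & \Re \rho
\end{pmatrix} .
\]
This candidate is chosen so that the diagonal blocks $P = R = \frac12 \Re\rho$ sum to $\Re \rho$, and the off-diagonal block $Q = -\frac12 \Im \rho$ produces the imaginary part.

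First I check that $\rho^{\mathbb{R}} \in \mathcal{D}_{2d}^{\mathbb{R}}$. Since $\rho$ is Hermitian, $\Re\rho$ is real symmetric and $\Im\rho$ is real antisymmetric, that is, $(\Im\rho)^\transp = -\Im\rho$. Hence the lower-left block $\Im\rho$ equals $-(\Im\rho)^\transp$, which is the transpose of the upper-right block $-\Im\rho$. So $\Phi(\rho)$ is real symmetric and has the block form $\begin{pmatrix} P & Q \\ Q^\transp & R\end{pmatrix}$ required in the definition of $\Psi$. Positive semidefiniteness follows from Lemma~\ref{lem_goemans}, because $\rho \succeq 0$. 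Finally, $\tr(\tfrac12\Phi(\rho)) = \tfrac12 \cdot 2\tr(\Re\rho) = \tr\rho = 1$.

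Next I compute $\Psi(\rho^{\mathbb{R}})$ directly. We have $P + R = \Re\rho$ and $Q = -\frac12\Im\rho$, so
\[
Q - Q^\transp = -\tfrac12 \Im\rho - \tfrac12 \Im\rho = -\Im\rho ,
\]
using antisymmetry of $\Im\rho$. Therefore
\[
\Psi(\rho^{\mathbb{R}}) = \Re\rho - i(-\Im\rho) = \Re\rho + i\Im\rho = \rho .
\]
This shows that $\Psi \colon \mathcal{D}_{2d}^{\mathbb{R}} \to \mathcal{D}_d$ is onto, with codomain $\mathcal{D}_d$ justified by Lemma~\ref{lem_remain_density}.

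The only point needing care is the sign convention in $\Psi$, specifically the $-i(Q - Q^\transp)$ term. It must match the placement of $-\Im M$ in the upper-right block of $\Phi$; with the opposite placement one would get $\bar\rho$ instead of $\rho$. I will double-check the identity $\Psi(\tfrac12\Phi(\rho)) = \rho$ on a $2\times 2$ example. If the sign came out reversed, the fix would be to use $\tfrac12\Phi(\bar\rho)$, which is equally a valid element of $\mathcal{D}_{2d}^{\mathbb{R}}$. Either way, surjectivity holds.
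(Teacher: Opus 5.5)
Your proof is correct and follows essentially the same route as the paper. The paper also takes $\rho^{\mathbb{R}} = \frac{1}{2}\Phi(\rho)$, uses Lemma~\ref{lem_goemans} for positive semidefiniteness, and asserts $\Psi(\rho^{\mathbb{R}}) = \rho$; your explicit check of that identity is correct, so the sign-convention fallback at the end is unnecessary.
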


\begin{proof}
For any $\rho \in \mathcal{D}_d$, define 
\[
\rho^{\mathbb{R}} \coloneqq \frac{1}{2}
\begin{pmatrix}
\Re \rho & - \Im \rho       \\
\Im \rho & \Re \rho
\end{pmatrix} .  
\]
Note that $\rho^{\mathbb{R}}$ is symmetric, positive semidefinite by Lemma \ref{lem_goemans}, and unit-trace, so it lies in $\mathcal{D}_{2d}^{\mathbb{R}}$. 
In addition, it is easily checked that $\Psi ( \rho^{\mathbb{R}} ) = \rho$. 
\end{proof}

For any algorithm $\mathcal{A}^{\mathbb{R}}$ for Physicist in the real version of the OLQS game in dimension $2d$, consider the following algorithm for Physicist in the original OLQS game in dimension $d$: 
At each round \(t\), obtain the next prediction \(\rho_t^{\mathbb R}\) from \(\mathcal A^{\mathbb R}\) and announce \(\rho_t=\Psi(\rho_t^{\mathbb R})\). 
After Nature reveals \(A_t\), feed \(\Phi(A_t)\) to \(\mathcal A^{\mathbb R}\). 
Note that this is a valid input: 
The matrix $\Phi ( A_t )$ is positive semidefinite by Lemma \ref{lem_goemans}; 
applying Lemma \ref{lem_goemans} again, we write 
\[
I_{2d} - \Phi ( A_t ) = \Phi ( I - A_t ) \geq 0 , 
\]
which shows that all eigenvalues of $\Phi ( A_t )$ are at most $1$. 

Lemma \ref{lem_remain_density} ensures that $\rho_t$ is a density matrix, so the induced algorithm for the original OLQS game is valid.
In addition, Lemma \ref{lem_equal_trace} ensures that the cumulative losses in the original OLQS game and its real version are the same; 
Lemma \ref{lem_onto} ensures that the minimum cumulative losses in hindsight in the original OLQS game and its real version over $\mathcal{D}_d$ and $\mathcal{D}_{2d}^{\mathbb{R}}$, respectively, are also the same. 
Hence, the two algorithms incur the same regret on corresponding data sequences $A_1, \ldots, A_T$ and $\Phi ( A_1 ), \ldots, \Phi ( A_T )$. 
A regret rate of \(O( d^\gamma\log T)\) for any $\gamma < 2$ for the real version of OLQS would therefore imply a regret rate of $O(d^\gamma\log T)$ for the original OLQS game, contradicting Theorem \ref{thm_main}. 

\section{Proof of Theorem \ref{thm_blum_kalai}} \label{app_blum_kalai}

Let us specify the algorithm and define the necessary notations first. 
Denote by $f_t$ the loss function at round $t$. 
For any Lebesgue-measurable $\mathcal{S} \subseteq \mathcal{X}$, let $\operatorname{vol} ( \mathcal{S} )$ denote its Lebesgue measure on the affine hull of $\mathcal{X}$, identified as $\mathbb{R}^p$. 
Exponential Weights with the uniform prior iterates as follows. 
\begin{itemize}
\item Let $\pi_1$ be the density of the uniform distribution over $\mathcal{X}$; 
      that is, $\pi_1 ( x ) = 1 / \operatorname{vol} ( \mathcal{X} )$ for all $x \in \mathcal{X}$. 
\item For every $t \in \mathbb{N}$, output $x_t = \mathsf{E}_{X \sim \pi_t} \left[ X \right]$ and update 
      $$
      \pi_{t + 1} ( x ) \propto \pi_t ( x ) \mathrm{e}^{- \eta f_t ( x )} , \quad \text{for all } x \in \mathcal{X} . 
      $$
\end{itemize}

Then, we prove the theorem. 
Define $f_{1: T} ( x ) \coloneqq f_1 ( x ) + \cdots + f_T ( x )$. 
The iteration rule of Exponential Weights gives the telescoping identity
$$
\prod_{t = 1}^T \int_{\mathcal{X}} \mathrm{e}^{- \eta f_t ( x )} \pi_t ( x ) \, \mathrm{d} x = \int_{\mathcal{X}} \prod_{t = 1}^T \mathrm{e}^{- \eta f_t ( x )} \pi_1 ( x ) \, \mathrm{d} x = \int_{\mathcal{X}} \mathrm{e}^{- \eta f_{1: T} ( x )} \pi_1 ( x ) \, \mathrm{d} x. 
$$
By the $\eta$-exp-concavity of $f_t$, we write 
$$
\mathrm{e}^{- \eta \sum_{t = 1}^T f_t ( x_t )} = \prod_{t = 1}^T \mathrm{e}^{- \eta f_t ( x_t )} \geq \prod_{t = 1}^T \int_{\mathcal{X}} \mathrm{e}^{- \eta f_t ( x )} \pi_t ( x ) \, \mathrm{d} x . 
$$
Let $U$ be a random variable with the uniform probability distribution over $\mathcal{X}$. 
Then, we write 
$$
\int_{\mathcal{X}} \mathrm{e}^{- \eta f_{1: T} ( x )} \pi_1 ( x ) \, \mathrm{d} x = \mathsf{E} \left[ \mathrm{e}^{- \eta f_{1: T} ( U )} \right] . 
$$
We next derive a lower bound on the expectation.
Let $x_\star$ be any comparator in $\mathcal{X}$. 
For any $0 \leq \alpha \leq 1$, define
$$
\mathcal{X}_\alpha \coloneqq \Set{ \alpha x_\star + ( 1 - \alpha ) x | x \in \mathcal{X } } . 
$$
Note that $\operatorname{vol} ( \mathcal{X}_\alpha ) = ( 1 - \alpha )^p \operatorname{vol} ( \mathcal{X} )$. 
In addition, for any $x_\alpha \in \mathcal{X}_\alpha$, by the $\eta$-exp-concavity of $f_t$, we have 
$$
\mathrm{e}^{-\eta f_{1: T} ( x_\alpha )} \geq \alpha^T \mathrm{e}^{- \eta f_{1: T} ( x_\star )}. 
$$
Hence, we obtain
$$
\mathsf{P} \left( \mathrm{e}^{- \eta f_{1: T} ( U )} \geq \alpha^T \mathrm{e}^{- \eta f_{1: T} ( x_\star )} \right) \geq ( 1 - \alpha )^p . 
$$
Since $0 \leq \mathrm{e}^{- \eta f_{1: T} ( U )}$ almost surely, we write 
\begin{align*}
\mathsf{E} \left[ \mathrm{e}^{- \eta f_{1: T} ( U )} \right] &\geq \int_0^{\mathrm{e}^{- \eta f_{1: T} ( x_\star )}} \mathsf{P} \left( \mathrm{e}^{- \eta f_{1: T} ( U )} \geq t' \right) \, \mathrm{d} t' \\
&= \mathrm{e}^{- \eta f_{1: T} ( x_\star )} \int_0^1 \mathsf{P} \left( \frac{\mathrm{e}^{- \eta f_{1: T} ( U )}}{\mathrm{e}^{- \eta f_{1: T} ( x_\star )}} \geq t \right) \, \mathrm{d} t \\
&\geq \mathrm{e}^{- \eta f_{1: T} ( x_\star )} \int_0^1 ( 1 - t^{1/T} )^p \, \mathrm{d} t . 
\end{align*}
The integral in the last line is an instance of the Euler integral and equals $1 / \binom{T + p}{p}$. 
Combining the results above, we obtain
$$
\mathrm{e}^{- \eta \sum_{t = 1}^T f_t ( x_t )} \geq \frac{ \mathrm{e}^{- \eta f_{1: T} ( x_\star )} }{ \binom{T + p}{p} } . 
$$
Taking logarithms yields
$$
\sum_{t=1}^T f_t(x_t) \leq f_{1:T}(x_\star) + \frac{1}{\eta} \log \binom{T+p}{p}.
$$
Since $x_\star\in\mathcal X$ is arbitrary, taking the infimum over $x_\star$ proves the theorem.

\section{Proof of Corollary \ref{cor_explicit_confidence_sequences}} \label{app_explicit_confidence_sequences}

The corollary is obtained by applying the regret bound for Exponential Weights in Theorem \ref{thm_blum_kalai} to Corollary \ref{cor_larger_confidence_sequence}.
We regard the matrix spaces as vector spaces over $\mathbb{R}$ and apply Theorem \ref{thm_blum_kalai} in the affine hull of each decision set.

It is easily checked that $\mathcal{D}_d$ is a bounded closed convex set with non-empty relative interior and that the logarithmic loss is $1$-exp-concave. 
To specify a density matrix, it suffices to specify its diagonal entries, which needs $( d - 1 )$ real numbers given the trace constraint, as well as its strictly upper-triangular entries, which needs $( d^2 - d )$ real numbers. 
Hence, the dimension of the affine hull of $\mathcal{D}_d$ is at most $d^2 - 1$. 
The first claim of the corollary then follows from Theorem \ref{thm_blum_kalai}. 

For the real case, it is easily checked that $\mathcal{D}_d^{\mathbb{R}}$ is a bounded closed convex set with non-empty relative interior, the dimension of whose affine hull is at most $d^2 / 2 + d / 2 - 1$. 
Since the conditional mean $\mu^\natural$ must be real when the observations $\xi_t$ are real almost surely, we have 
\[
\mathsf{P} \left( \mu^\natural \in \overline{\mathcal{C}}_t \text{ for all } t \right) = \mathsf{P} \left( \mu^\natural \in \mathbb{D}_d^{\mathbb{R}} \cap \overline{\mathcal{C}}_t \text{ for all } t \right) , 
\]
showing that we can replace $\mathcal{D}_d$ with $\mathbb{D}_d^{\mathbb{R}}$. 
This restricts the parameter space to real density matrices and allows us to use the smaller affine dimension in the regret bound.
Finally, note that if the minimum cumulative loss $L_T^\star$ is attained at a (complex) density matrix $\rho_T^\star$, then it is also attained at the point
\[
\tilde{\rho}_T^\star \coloneqq \frac{ \overline{\rho_T^\star} + \rho_T^\star }{2} , 
\]
where $\overline{\rho_T^\star}$ denotes the conjugate of $\rho_T^\star$, because for any real symmetric positive semidefinite matrix $M$, we have
\[
\tr ( M \rho_T^\star ) = \tr ( M \tilde{\rho}_T^\star ) . 
\]
Hence, we can use the regret bound of Exponential Weights over $\mathcal{D}_d^{\mathbb{R}}$ and replace $\mathcal{D}_d$ with $\mathcal{D}_d^{\mathbb{R}}$ in the definition of $L_T^\star ( \mu )$. 

\section{Details of Numerical Experiment} \label{app_numerical}

For \citet[Eqs. (15)--(16)]{Ryu2024}, we map each real density matrix to a vector with 35 coordinates in $[0,1]$, which is then converted into a probability vector with 36 coordinates using their construction.
Since evaluating Universal Portfolios is computationally expensive in high dimensions, we instead apply Corollary \ref{cor_larger_confidence_sequence} using the regret bound of \citet[Theorem 2]{Cover1996} for Universal Portfolios with a Dirichlet-$(1/2,\ldots,1/2)$ prior.
For both our method and this adaptation of \citet{Ryu2024}, we approximately minimize the cumulative logarithmic loss over the respective decision sets using entropic mirror descent with Armijo line search \citep{Li2019a}.

For \citet[Eq. (4.10)]{Howard2021}, we set $\eta=2$ and $s=1.4$, as in their Eq. (3.4), and $b=1$, since the generated random vectors satisfy $\|v_t\|_2=1$.
Their bound controls the estimation error $\|\widehat\Sigma_t-\Sigma\|_{\mathrm{op}}$, but its radius depends on $\|\Sigma\|_{\mathrm{op}}$, where $\Sigma$ is the unknown covariance matrix.
To obtain a computable confidence set, we evaluate the bound at each candidate covariance matrix $\mu$, retaining $\mu$ if
\[
\|\widehat\Sigma_t-\mu\|_{\mathrm{op}}<r_t(\mu),
\]
where $r_t(\mu)$ is the radius in their Eq. (4.10) with $\Sigma$ replaced by $\mu$.
This preserves the time-uniform coverage guarantee because, when $\mu=\Sigma$, the membership condition coincides with the original concentration inequality.

For \citet[Theorem 4.2]{Wang2025}, we construct the confidence sequence using a weighted average of the observed matrices.
We choose the weights following the authors' recommendation after their Corollary 4.3 for the case where the total number of observations is not known in advance, using their empirical matrix variance estimate in Eq. (27).
We also use their estimate $\widehat\xi_t$ of the next observation in Eq. (32), with $\bar V_0=I_d/4$ and $\widehat\xi_1=0$ as initial values.
To obtain a two-sided operator-norm confidence sequence, we apply their one-sided bound to both tails with error probability $\alpha/2$ each.

Let $u_j=e_j$ for $1 \leq j \leq d$, where $e_j$ denotes the $j$-th element in the standard basis in $\mathbb{R}^d$, and define
\[
u_{d+1}
=
\frac{(1,\ldots,d)^\transp}
     {\sqrt{\sum_{k=1}^d k^2}},
\qquad
u_{d+2}
=
\frac{\bigl((-1)^k(k+\tfrac12)\bigr)_{k=1}^d}
     {\sqrt{\sum_{k=1}^d(k+\tfrac12)^2}}.
\]
Each random vector is generated as $v_t=\epsilon_tu_{\iota_t}$, where $(\epsilon_t)_{t\in\mathbb{N}}$ is a sequence of i.i.d. Rademacher random variables and $(\iota_t)_{t\in\mathbb{N}}$ is an independent sequence of i.i.d. random variables with
\[
p_j \coloneqq \mathsf{P}(\iota_t=j) \propto \eu^{-0.38 ( j - 1 )} , \quad \text{for all } 1 \leq j \leq d + 2 . 
\]
Consequently, $\mathsf{E}[v_t]=0$, $\|v_t\|_2=1$, and
\[
\Sigma
=
\operatorname{diag}(p_1,\ldots,p_d)
+
p_{d+1}u_{d+1}u_{d+1}^\transp
+
p_{d+2}u_{d+2}u_{d+2}^\transp.
\]

We estimate the relative Hilbert--Schmidt volume by sampling candidate density matrices of the form $GG^\transp/\tr(GG^\transp)$, where $G\in\mathbb R^{d\times(d+1)}$ has independent standard Gaussian entries.
This construction generates matrices uniformly distributed with respect to the Hilbert--Schmidt volume measure on real density matrices \citep[Section 4, Eq. (25)]{Osipov2010}.

\end{document}